\DeclareMathOperator*{\argmax}{arg\,max}
\newcommand{\cjqe}{\color{black}}
\newcommand{\NAM}{\textsc{DART}}
\def\@editor{}
\begin{document}
	
	%
	
	%
	
\title{DART:  aDaptive Accept RejecT for Non-Linear Top-$K$ Subset Identification}
\author{Mridul Agarwal, Vaneet Aggarwal, Christopher J. Quinn, and Abhishek K. Umrawal \thanks{M. Agarwal and A. K. Umrawal were with Purdue University, West Lafayette IN 47907, USA when this work was accomplished.  V. Aggarwal is with Purdue University, West Lafayette IN 47907, USA, email: vaneet@purdue.edu. C. J. Quinn is with Iowa State University, email: cjquinn@iastate.edu. This version is an extended version of AAAI 2021 paper \citep{agarwal2021dart}.} }
	\editor{}

\maketitle

\begin{abstract}
We consider the bandit problem of selecting $K$ out of $N$ arms at each time step. The reward can be a non-linear function of the rewards of the selected individual arms.
The direct use of a multi-armed bandit algorithm requires choosing among $\binom{N}{K}$ options, making the action space large. To simplify the problem, existing works on combinatorial bandits {typically} assume feedback as a linear function of individual rewards. In this paper, we prove the lower bound for top-$K$ subset selection with bandit feedback with possibly correlated rewards.
We present a novel algorithm for the combinatorial setting without using individual arm feedback or requiring linearity of the reward function. Additionally, our algorithm works on correlated rewards of individual arms.  Our algorithm, aDaptive Accept RejecT (\NAM), sequentially finds good arms and eliminates bad arms based on confidence bounds.
\NAM\  is computationally efficient and uses storage linear in $N$.
Further, \NAM\  achieves a regret bound of $\tilde{\mathcal{O}}(K\sqrt{KNT})$ for a time horizon $T$, which matches the lower bound in bandit feedback up to a factor of $K\sqrt{\log{2NT}}$. When applied to the problem of cross-selling optimization and maximizing the mean of individual rewards, the performance of the proposed algorithm surpasses that of state-of-the-art algorithms. We also show that \NAM\ significantly outperforms existing methods for both linear and non-linear joint reward environments.
\end{abstract}
\section{Introduction}
We consider the problem of finding the best subset of $K$ out of $N$ items to optimize a possibly non-linear function of the reward of each item. We note that the joint reward as a function of individual rewards is a much more natural setting to understand and arises in a number of settings.
For example, in 
the problem of erasure-coded storage \citep{xiang2016joint}, the agent chooses $K$ out of $N$ servers to obtain the content for each request; the final reward is the negative of the time taken by the slowest server. A recommendation system agent may present a list of $K$ items out of $N$ items to the user for a non-zero reward only if the user selects an item \citep{kveton2015cascading} from the list.  Similarly, in cross-selling item selection, a retailer creates a bundle with $K$ items, and the joint reward is a quadratic function of the selected items' individual rewards \citep{1250942}. The problem of a daily advertising campaign is characterized by a set of sub-campaigns where the aggregate reward is the sum of the rewards of sub-campaigns \citep{zhang2012joint,nuara2018combinatorial}.  Combinatorial Multi-Armed Bandit (CMAB) algorithms can solve these problems in an online manner.  For many CMAB algorithms, we can bound the regret, that is the loss incurred from accidentally selecting  sub-optimal sets some of the time.  
We aim to find a space and time-efficient CMAB algorithm that minimizes cumulative regret.

{

Existing algorithms for $K=1$ that use Upper Confidence Bound (UCB) or Bayesian resampling methods
\citep{auer02using,auer2002finite,auer2010ucb,thompson1933likelihood,agrawal2012analysis,gopalan14thompson} can bound the regret by $\tilde{\mathcal{O}}(\sqrt{NT})$.
These methods can be naturally extended to the combinatorial setting where $K$ arms are chosen, treating each of the $\binom{N}{K}$ possible actions as a distinct `arm'.  Unfortunately, this approach has two significant drawbacks. First, the regret increases exponentially in $K$ as the number of total actions to explore has grown from $N$ to $\binom{N}{K}$. Second, the time and space complexities  increase exponentially in $K$, requiring storage of values for all actions to find the  action with the highest UCB \citep{auer2010ucb,auer2002finite} or highest sampled rewards \citep{agrawal2012analysis}.
}



This paper addresses those issues by proposing a novel algorithm called \textbf{aDaptive Accept RejecT} (\NAM).    To estimate the ``goodness'' of an arm, we use the mean of the rewards obtained by playing actions containing that arm.  In an adaptive manner, \NAM\  moves arms to ``accept'' or ``reject'' sets based on those estimates, reducing the number of arms that require further exploration.
{{We assume that the expected joint reward of an arm $i$ with any choice of the remaining $K-1$ arms is better than the expected joint reward of arm $j$ with the same $K-1$ other arms if arm $i$ is individually better than arm $j$. }} This assumption is naturally satisfied in many online decision-making settings, such as  click-model bandits \citep{kveton2015cascading,lattimore2018toprank}.
We then use Lipschitz continuity of the joint reward function to relate orderings between pairs of arms $i$ and $j$  to orderings between pairs of actions containing those arms. 
We construct a martingale sequence to analyze the regret bound of \NAM. Furthermore, \NAM\ achieves a space complexity of $\mathcal{O}(N)$ and a per-round time complexity of $\tilde{\mathcal{O}}(N)$.  

The main contributions of this paper can be summarized as follows:
\begin{enumerate}
    \item We propose \NAM\ - a time and space efficient algorithm for the non-linear top-$K$ subset selection problem with only the joint reward as feedback.   We show that \NAM\ has a per-step time complexity of $\Tilde{\mathcal{O}}(N)$ and space complexity of $\mathcal{O}(N)$.
    
    \item We prove a lower bound of {$\Omega(\sqrt{NKT})$} for the regret of top-$K$ subset selection problem for a linear setup where the joint reward is the mean of individual rewards and the individual rewards are possibly correlated.
    
    \item We prove that \NAM\ achieves a (pseudo-)regret of $\Tilde{\mathcal{O}}(K\sqrt{NKT})$ over a time horizon $T$ and under certain assumptions.
\end{enumerate}

We also empirically evaluate the proposed algorithm \NAM, comparing it to other, state-of-the-art full-bandit feedback CMAB algorithms.  We first consider a linear setting, where the joint reward is simply the mean of individual arm rewards.  We also examine the setting where the joint reward is a  quadratic function of individual arm rewards, based on the problem of cross-selling item selection \citep{1250942}.  Our algorithm significantly outperforms existing state-of-the-art algorithms, while only using polynomial space and time complexity.  



\section{Related Works} \citet{dani08stochastic,NIPS2007_3371,Cesa-Bianchi:2012:CB:2240304.2240495,Audibert:2014:ROC:2765232.2765234,abbasi-yadkori11improved,li2010contextual,agrawal2013thompson} consider a linear bandit setup where at time $t$, the agent selects a  length $N$ vector $x_t$ from the decision set $D_t\subset\mathbb{R}^N$ and observes a reward $\theta^Tx_t$ for an unknown constant vector $\theta\in\mathbb{R}^N$. The algorithms proposed in these works use the linearity of the reward function to estimate rewards of individual arms and achieve a regret of $\tilde{\mathcal{O}}(\sqrt{NT})$.
\citet{filippi10parametric,jun17scalable,li17provably} studied the problem of generalized linear models (GLM) where the reward $r_t$ is a function $\left(f(z):\mathbb{R}\to\mathbb{R}\right)$ of $z=\theta^T x_t$ plus some noise. Generalized linear model algorithms also obtain a regret bound of $\tilde{\mathcal{O}}(\sqrt{NT})$. 
{{These (generalized) linear model algorithms can naturally be extended to our setup for linear joint reward functions.}} However, the space and time complexity remains exponential in $K$ to store all possible $\binom{N}{K}$ actions.


For the setting of $K=1$, \citet{pmlr-v84-liau18a} reduces the space complexity from $\mathcal{O}(N)$ to $\mathcal{O}(1)$ at the cost of worse regret bounds. When {extended to the combinatorial setting, treating each set of $K$ arms as a distinct `arm,'} the regret bound becomes exponential in $K$. Recently, \citet{rejwan2020top} bounded the regret by $\mathcal{O}(K{\sqrt{NT}})$ for identifying the best $K$ subset, in the case when the joint reward is the sum of rewards of independent arms, using $\mathcal{O}(N)$ space and {per-round} time complexity. \citet{pmlr-v32-lind14} considered the combinatorial bandit problem with a non-linear reward function and additional feedback, where the feedback is a linear combination of the rewards of the $K$ arms. Such feedback 
allows for the recovery of individual rewards. \citet{agarwal2022stochastic} proposed a divide-and-conquer-based algorithm for the best $K$ subset problem with a non-linear joint reward with bandit feedback.  
\citet{agarwal2022stochastic} worked on a setup that is most similar to ours. 
Algorithms by \citet{pmlr-v32-lind14,agarwal2022stochastic} achieve $\mathcal{O}(T^{2/3})$ regret while the proposed algorithm in this paper achieves $\mathcal{O}(T^{1/2})$ regret. 

Many works have studied the semi-bandit setting, where individual arm rewards are also available as feedback \citep{kveton2014matroid,pmlr-v28-chen13a,kveton2014matroid,lattimore2018toprank,gai2012combinatorial,gai2010learning}. 
\citet{kveton2014matroid} provides a UCB-type algorithm for matroid bandits, where the agent selects a maximal independent set of rank $K$ to maximize the sum of individual arm rewards. \citet{pmlr-v28-chen13a} considered the combinatorial semi-bandit problem with non-linear rewards using a UCB-type analysis. {\color{black}Recently, \citet{merlis2019batch} studied the setup of combinatorial semi-bandits with non-linear rewards. The joint reward is a linear combination of multiple possible reward functions of combinatorial actions. Further, each reward function is possibly non-linear and Gini-smooth. With access to an oracle, they obtain an upper bound of  $\tilde{\mathcal{O}}(\gamma_g\sqrt{NT})$ on the regret where $\gamma_g$ is Gini-smoothness parameter which scales with $K$. Further, they also show that the gap-dependent bound of their algorithm is optimal up to logarithmic factors \citep{merlis2020tight}.}
In contrast to these prior works, we consider the full-bandit setting where individual arm rewards are not available. \citet{kveton2015tight,lattimore2018toprank} proved a lower bound of $\Omega(\sqrt{NKT})$ for semi-bandit problems {where the joint reward is simply the sum of individual arm rewards.} \citet{kalyanakrishnan2012pac} also provides a lower bound for the best $K$ subset problem of $\Omega\left(\frac{N}{\epsilon^2}\log{\left(\frac{K}{\delta}\right)}\right)$ for any $(\epsilon,\delta)$-PAC algorithm playing single arm at each time. \citep{audibert2014regret} obtained a lower bound of $\Omega(K\sqrt{NT})$ for bandit feedback and provide an algorithm with regret bound of $\Omega(K\sqrt{NKT})$ for linear bandits without assuming independence between arms. \citet{cohen2017tight} obtain a tighter lower bound of $\Omega(K\sqrt{KNT})$ for a bandit setup where the rewards of individual arms are possibly correlated. { We obtain a regret bound for $\Omega(\sqrt{KNT})$ for a problem setup where the total joint reward lies between [0,1]. For such a setup, our proposed algorithm achieves a lower bound which is tight up to a factor of $K$ and logarithmic terms for bandit feedback with possibly correlated rewards.}

{
\Cref{tab:lit-review} provides the comparison of the difference in scale in terms of regret bound and per-step time complexity between our and related works.

\begin{table}[h]
\caption{Comparison of different algorithms.} \label{tab:lit-review}
\centering
\begin{tabular}{|c|c|c|c|}
    \hline \hline
    Algorithm & Setup & Regret Bound & Per-Step Time Complexity\\
    \hline
    DART (This paper) & Non-Linear&  $\Tilde{O}(K\sqrt{NT})$&  $\Tilde{O}(N)$ \\
    UCB \citep{auer2010ucb} &  General& $\Tilde{O}(\sqrt{{N\choose K} T})$ &  $\Tilde{O}({N\choose K})$ \\
    CSAR \citep{rejwan2020top} &  Linear &  $\Tilde{O}(K\sqrt{NT})$ & $\Tilde{O}(K^2)$ \\  
    CMAB-SM \citep{agarwal2022stochastic} & Non-Linear&$\Tilde{O}(K^{1/2}N^{1/3}T^{2/3})$ & $\Tilde{O}(K)$\\
    LinTS \citep{agrawal2013thompson} & Linear&  $\Tilde{O}(N\sqrt{T})$& $\Tilde{O}(N^3)$ \\
    LinUCB \citep{li2010contextual} & Linear & $\Tilde{O}(\sqrt{NT})$& $\Tilde{O}(N^3)$\\
    \hline
    \hline
\end{tabular}
\end{table}
}
\if 0
\subsection{Related Works}
\citep{dani08stochastic,NIPS2007_3371,Cesa-Bianchi:2012:CB:2240304.2240495,Audibert:2014:ROC:2765232.2765234,abbasi-yadkori11improved,li2010contextual} generalized the setup of choosing $K$ from $N$ arms to a linear setting. They consider a setting where at time $t$, the agent selects an arm $x_t\in D_t$ and observes a reward $\theta^Tx_t$. Where $D_t\subset\mathbb{R}^K$ is the decision set and $\theta\in\mathbb{R}^K$ is a constant vector. This setup is also called online linear optimization. The algorithms proposed in these works use the linearity of the reward function to estimate the rewards of individual arms and achieve a regret of $O(\sqrt{T})$. The proposed algorithms solve an optimization problem to find the best action from the $\binom{N}{K}$ actions to select the action to play, and hence these algorithms are not computationally efficient for large $N$. For a linear function, such as the sum of rewards, we can construct arms which is a binary $K$-sparse vector of length $N$. Such a setting has $N$ unknown variables, and those unknowns could be obtained using least squares as done by \citep{dani08stochastic} or regularized least squares \citep{abbasi-yadkori11improved}. We consider non-linear functions for which the expected rewards of individual arms could not be obtained using the least squares solution. \citep{pmlr-v32-lind14} assumed a combinatorial bandit problem with a non-linear reward function and feedback, where the feedback is a linear combination of rewards of the $K$ arms. Such feedback of linear function of rewards allows for the recovery of individual rewards.

\citep{filippi10parametric,jun17scalable,li17provably} studied the problem of generalized linear models (GLM) where reward $r_t$ is a function $\left(f(z):\mathbb{R}\to\mathbb{R}\right)$ of $z=\theta^T x_t$. Generalized Linear Models assume that the expected reward of the arm played is a non-linear function of the linear combination of features of the action played with a fixed parameter. This is different than our setup where we assume the reward of the action played is a non-linear function of individual realization of rewards of each arm. 


\citep{kveton2014matroid} provides a UCB style algorithm for matroid bandits, where the agent selects a maximal independent set of rank $K$ to maximize the sum of rewards of each arm. They assume the reward of each of the $K$ arms is also observed in each round. Such a setup, where the reward of each of the $K$ arms is also available to the agent, is referred to as a semi-bandit problem. \citep{gai2010learning} also considered the problem of semi-bandits for the problem of maximum weighted matching for cognitive radio applications. \citep{kveton2015tight} showed that the UCB algorithm provides a tight regret bound for semi-bandit combinatorial bandit problem with linear reward function. \citep{pmlr-v28-chen13a} considered combinatorial semi-bandit problem with non-linear rewards using a UCB style analysis. In contrast to prior works, this paper does not consider the availability of individual arm rewards or linear feedback.  With only aggregate, non-linear feedback, it might not be possible to obtain the exact values of the rewards of base arms.  

\citep{pmlr-v84-liau18a} reduces the space complexity at the cost of regret bounds. However, they still loop over all the possible actions and the regret bound becomes exponential in $K$. Recently \citep{rejwan2020top} bound the regret by $O(\sqrt{T})$ for identifying the best $K$ subset in constant space and time complexity using Hadamard matrices of size $2K$ to estimate individual arm rewards. Their setup is limited to joint rewards as the sum of individual arm rewards. Also, obtaining Hadamard $2K$ may not be a trivial task for all $K$. \citep{agarwal2021stochastic, agarwal2022stochastic} also attempted a more generalized version of finding the best subset of size $K$ problem where they consider the joint reward to be non-linear which is similar to our problem. They divide an arbitrary permutation of $N$ arms into groups of size $K+1$ and find a ranking between $K+1$ from $\binom{K+1}{K} = K+1$ actions, and then build up the best $K$ arms by successively merging the groups. This divide-and-conquer-based approach bound the regret by $O(\sqrt{T})$.
\fi
\section{Problem Formulation}\label{sec:formulation}
We consider $N$ ``arms'' labeled as $i \in [N] = \{1, 2, \cdots, N\}$. On playing arm $i$ at time step $t$, it generates a reward $X_{i,t} \in [0,1]$ which is a random variable. We assume that $X_{i,t}$ are independent across time, and for any arm the distribution 
 is identical at all times. {For simplicity, we will use $X_i$ instead of $X_{i,t}$ for analysis that holds for any $t$.} 
{The distribution for each arm $i$'s rewards $\{X_{i,t}\}_{t=1}^T$ could be discrete, continuous, or mixed.}

The agent can only play an action $\bm{a}\in\mathcal{N}$ where $\mathcal{N} = \{{\bm a}\in[N]^K \ \big| \  {\bm a}(i) \neq {\bm a}(j)\ \forall \ i,j:\ 1 \leq i< j\leq K\}$ is the set of all $K$ sized tuples created using arms in $[N]$. Thus, the cardinality of $\mathcal{N}$ is $\binom{N}{K}$. For an action $\bm{a}$, let $\bm{X}_{\bm{a},t} = (X_{\bm{a}(1),t}, X_{\bm{a}(2),t}, \cdots ,X_{\bm{a}(K),t})$ be  the column reward vector of individual arm rewards at time $t$ from arms in action $\bm{a}$. The reward $r_{\bm{a},t}$ of an action $\bm{a}$ at time $t$ is a bounded function $f:[0,1]^K\to[0,1]$ of the individual arm rewards
\begin{align}
r_{\bm{a},t} &= f({\bm X}_{{\bm a}, t}). \label{eq:R_at_t}
\end{align}
{As $X_{i,t}$ are i.i.d. across time $t$, $\bm{X}_{\bm{a},t}$ are also i.i.d. across time $t$ for all $\bm{a}\in\mathcal{N}$. Later in the text, we will skip index $t$, for brevity, where it is unambiguous.} We denote the expected reward of any action ${\bm a} \in \mathcal{N}$ as  $\mu_{\bm a} = \mathbb{E}[r_{\bm a}]$.
We assume that there is a unique ``optimal'' action ${\bm a}^*$ for which the expected reward  $\mu_{ {\bm a}^*}$ is highest among all actions,
\begin{align}
{\bm a}^* &= \argmax_{{\bm a}\in\mathcal{N}}\ \mu_{\bm a}.
\end{align} 

At the time $t$, the agent plays an action $\bm{a}_t$ randomly sampled from an arbitrary distribution over $\mathcal{N}$ dependent on the history of played actions and observed rewards till time $t-1$.
The agent aims to reduce {the} cumulative (pseudo-)regret $R$ over time horizon $T$, defined as the expected difference between the rewards of the best action in hindsight and the actions selected by the agent.
\begin{align}
    R &= \mathbb{E}_{\bm{a}_1, r_{\bm{a}_1}(1), \cdots, \bm{a}_T, r_{\bm{a}_T}(T)}\left[\sum_{t=1}^T r_{\bm{a}^*}(t) - r_{\bm{a}_t}(t)\right] \\
    &= T\mu_{\bm{a}^*}-\mathbb{E}_{\bm{a}_1, \cdots, \bm{a}_T, }\left[\sum_{t=1}^T  \mu_{\bm{a}_t}\right]. \label{eq:regret}
\end{align}

 %
%

We define   the gap $\Delta_{i,j}$ between two arms $i$ and $j$ as the difference between the expected rewards of arm $i$ and arm $j$,
\begin{align}
    \Delta_{i,j} = \mathbb{E}\left[X_i\right] -\mathbb{E}\left[X_j\right].
\end{align}

We now mention the assumptions for this paper. We first assume that the joint reward function $f$ is permutation invariant. 
Let $\Pi$ denote the set of all permutation functions of a length $K$ vector.  
\begin{assumption}[Symmetry] \label{symmetry_assumption}
    For all ${\bm a} \in \mathcal{N}$ and for any permutation $\pi \in \Pi$ of the vector ${\bm X}_{\bm a}$ of individual arm rewards, 
    \begin{align}
        f\left({\bm X}_{\bm a}\right) = f\left(\pi({\bm X}_{\bm a})\right)
    \end{align}
\end{assumption}
{\color{black}
Using Assumption \ref{symmetry_assumption}, we note that $f(\cdot)$ is essentially a function of the set of individual arm rewards of arms in $\bm{a}\in\mathcal{N}$. Hence, we define $\bm{X}_{\bm{S}_1\cup\bm{S}_2}$, where $\bm{S}_1\cup\bm{S}_2\in\mathcal{N}$ as a vector of rewards of individual arms in $\bm{S}_1\cup\bm{S}_2\in\mathcal{N}$.
}




We also assume that the expected reward of the action with a good arm is higher than the expected reward of action with a bad arm for all possible combinations of the remaining $K-1$ arms. Further, if two arms, $i$, and $j$, are equally good, then without loss of generality the resulting actions with arm $i$ will be at least as good as the resulting actions with arm $j$ for all possible combinations of the remaining $K-1$ arms.

{

\begin{assumption}[Good arms generate good actions] \label{monotone_assumption}
    We assume that if and only if the expected reward of arm $i$ is higher than the expected reward of arm $j$ (for any given $i\ne j$), then for any subset $\bm{S}$ of size $K-1$ arms chosen from the remaining $N-2$ arms (arms excluding $i$ and $j$), the expected reward of $\bm{S}\cup \{i\}\in\mathcal{N}$ is higher than the expected reward of $\bm{S}\cup \{j\}\in\mathcal{N}$.  More precisely, 
    \begin{align}
        \mathbb{E}[X_i] > \mathbb{E}[X_j]& \iff \mathbb{E}\left[f\left( \bm{X}_{\bm{S}\cup \{i\}}\right)\right] > \mathbb{E}\left[f\left( \bm{X}_{\bm{S}\cup \{j\}}\right)\right] \text{, and}\nonumber\\
        \mathbb{E}[X_i] = \mathbb{E}[X_j]& \iff \mathbb{E}\left[f\left( \bm{X}_{\bm{S}\cup \{i\}}\right)\right] = \mathbb{E}\left[f\left( \bm{X}_{\bm{S}\cup \{j\}}\right)\right]\label{eq:monotone_assumption}
    \end{align}
for all $\bm{S}$ and $i,j\in[N]$.
\end{assumption}
}




We also assume that $f(\cdot)$ is bi-Lipschitz continuous (in an expected sense).

{

Let $\mathbb{E}[{\bm{X}}_{{\bf a}}]$ denote the vector of mean arm rewards for arms in ${\bf a}$. Then we have the following assumption.
\begin{assumption}[Continuity of expected rewards] \label{continuous_assumption}
    The expected value of $f(\cdot)$ is bi-Lipschitz continuous with respect to the expected value of the rewards obtained by the individual arms,
    if there exists a $U< \infty$ such that, 
    \begin{align}
        \frac{1}{U} \min_{\pi'\in\Pi}{\big\|}\mathbb{E}[{\bm{X}}_{{\bf a}_1}] - \pi'(\mathbb{E}[{\bm{X}}_{{\bf a}_2}]){\big\|_1} \leq \big|\mu_{{\bf a}_1} -\ \mu_{{\bf a}_2} \big|
        \leq \ U {\big\|}\mathbb{E}[{\bm{X}}_{{\bf a}_1}] - \pi(\mathbb{E}[{\bm{X}}_{{\bf a}_2}]){\big\|_1} \label{eq:cont_lower_defn_n}
    \end{align}   
    for any pair of actions ${\bf a}_1,{\bf a}_2 \in \mathcal{N}$ and for any permutation $\pi$ of $\bm{X}$. \footnote{We note that followed by our AAAI paper, (Wagde and Saha, OPT 2025) pointed an issue in the earlier version, where there was no min in the left hand side. Having the minimum fixes the issue. }
\end{assumption}
}

    

{Using the continuity assumption, we obtain the following corollary which bounds the expected reward of individual of any two arms $i,j$ using an action by replacing arms $i$ with arm $j$. We have:
\begin{corollary} \label{inverse_continuity}
    For any arms $i, j\in\mathcal{N}$ and any subset $\bm{S}\subset \mathcal{N}\setminus \{i,j\}$ of size $K-1$, 
        \begin{align}
    \big|\mathbb{E}[X_i] - \mathbb{E}[X_j]\big|  
    \leq U\big| \mu_{\bm{S}\cup\{i\}} - \mu_{\bm{S}\cup\{j\}}\big|.
    \end{align}
\end{corollary}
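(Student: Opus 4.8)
The statement is an immediate specialization of the lower bi-Lipschitz bound in Assumption~\ref{continuous_assumption}, so the plan is to instantiate that inequality with a carefully chosen pair of actions and a carefully chosen permutation. First I would set ${\bm a}_1$ to be the action consisting of the arms in $\bm{S}$ together with arm $i$, and ${\bm a}_2$ to be the action consisting of the arms in $\bm{S}$ together with arm $j$; both lie in $\mathcal{N}$ since $\bm{S}$ has size $K-1$ and is disjoint from $\{i,j\}$, so each tuple has $K$ distinct entries.

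Next I would choose the permutation $\pi$ that aligns the coordinates of $\mathbb{E}[{\bm d}_{{\bm a}_2}]$ corresponding to arms of $\bm{S}$ with the coordinates of $\mathbb{E}[{\bm d}_{{\bm a}_1}]$ corresponding to the same arms of $\bm{S}$, mapping the single remaining coordinate (arm $j$) onto the position of arm $i$. With this alignment, the $K-1$ shared coordinates cancel exactly in the difference vector, leaving only one nonzero entry, and hence
\begin{align}
\big\| \mathbb{E}[{\bm d}_{{\bm a}_1}] - \pi(\mathbb{E}[{\bm d}_{{\bm a}_2}]) \big\|_1 = \big| \mathbb{E}[X_i] - \mathbb{E}[X_j] \big|.
\end{align}
Plugging this into the left inequality of \eqref{eq:cont_lower_defn_n} gives $\frac{1}{U}\big|\mathbb{E}[X_i]-\mathbb{E}[X_j]\big| \le \big|\mu_{{\bm a}_1}-\mu_{{\bm a}_2}\big| = \big|\mu_{\bm{S}\cup\{i\}} - \mu_{\bm{S}\cup\{j\}}\big|$, which rearranges to the claim.

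The only point requiring a little care is the bookkeeping about coordinate orderings: $\bm{d}_{\bm a}$ is indexed by the tuple order of ${\bm a}$, so in general the $\bm{S}$-coordinates need not sit in the same positions in ${\bm a}_1$ and ${\bm a}_2$. This is precisely why the permutation $\pi$ appears in Assumption~\ref{continuous_assumption}, and invoking Assumption~1 (symmetry of $f$) ensures $\mu_{\bm a}$ is well-defined independently of this ordering; so this is a notational obstacle rather than a mathematical one. No martingale or concentration argument is needed here — the corollary is purely a consequence of the deterministic bi-Lipschitz assumption on the expected reward map.
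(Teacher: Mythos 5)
Your proposal is correct and follows exactly the paper's argument: the paper's proof is the single line ``choose $\bm{a}_1 = \bm{S}\cup\{i\}$ and $\bm{a}_2 = \bm{S}\cup\{j\}$'' and apply the lower bi-Lipschitz inequality of Assumption~\ref{continuous_assumption}. You simply make explicit the choice of aligning permutation and the resulting cancellation of the $K-1$ shared coordinates, which the paper leaves implicit.
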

\begin{proof}
We obtain the result by choosing $\bm{a}_1 = \bm{S}\cup\{i\}$ and $\bm{a}_2 = \bm{S}\cup\{j\}$.
\end{proof}
}


{

Assumptions 1-3 are satisfied for many problem setups, such as in the  cascade model for clicks \citep{kveton2015cascading} where a user interacting with a list of documents clicks on the first documents the user likes. The joint reward  $r_{\bm{a}_t, t} = \max (X_{\bm{a}_t(1),t},\cdots,X_{\bm{a}_t(1),t})$ is the maximum of individual arm rewards where, for all $i\in[N], t\in[T]$, $X_{i,t}$ follows Bernoulli distribution with $\mathbb{E}[X_{i,t}]\in(0,1)$. Since $\max(a,b) = \max(b,a)$, the Assumption \ref{symmetry_assumption} is satisfied. For Assumption \ref{monotone_assumption}, consider a set, $\bm{S}$, of size $K-1$ by selecting arms from $[N]\setminus\{i,j\}$. Then actions $S\cup\{i\}$ and $S\cup\{i\}$ have expected rewards $1-\left(\Pi_{k\in\bm{S}}(1-\mathbb{E}[X_k])\right)(1-\mathbb{E}[X_i]) > 1-\left(\Pi_{k\in\bm{S}}(1-\mathbb{E}[X_k])\right)(1-\mathbb{E}[X_j])$ which holds when $\mathbb{E}[X_i]>\mathbb{E}[X_j]$ Further, the bi-Lipschitz property in Assumption \ref{continuous_assumption} in individual expected rewards holds from the fact that $\mathbb{E}[\mu_{\bm{S}\cup\{i\}}] - \mathbb{E}[\mu_{\bm{S}\cup\{i\}}] = \left(\Pi_{k\in\bm{S}}(1-p_k)\right)(p_i- p_j)$.

} 

The assumptions are also satisfied in cross-selling optimization \citep{1250942}, where the reward is a quadratic function of the individual items sold in a bundle $K$ with $r_{\bm{a},t} = \bm{X}_{\bm{a},t}^T\bm{A}\bm{X}_{\bm{a},t}$ for some matrix $\bm{A}$. The assumptions are also satisfied for joint reward functions such as the mean of individual arm  rewards \citep{rejwan2020top,pmlr-v28-chen13a}.


\section{Lower bound on top-$K$ subset identification}
Given the problem formulation, we now prove a tight lower bound on the subset identification problem for a linear joint reward function with correlated arms. We consider a specific setup.  {Let $\bm{a}^* = (1, 2, \cdots, K)$ denote the tuple of best arms which is initially unknown to the agent}. Define the reward function as {\color{black}$f(\bm{X}) = (1/K)\sum_{i=1}^K\bm{X}(i)$}, where $\bm{X}(i)$ is the $i^{th}$ entry of $\bm{X}$. The individual arm distributions are of the form $X_{i,t}' = 1/2 + \epsilon\bm{1}_{\{i\in\bm{a}^*\}} + Z_t$, where $Z_{t}$ follows a Gaussian distribution with mean $0$ and variance $\sigma^2$ and $\bm{1}_{\{\cdot\}}$ denotes the indicator function.  The arms are correlated through the shared additive term $Z_t$.  
\cjqe

{\color{black}
\begin{theorem} \label{thm:lower_bound}
For $\epsilon = \frac{\sigma}{2}\sqrt{\frac{NK}{2T}}$, any deterministic player must suffer expected regret of at least $\Omega(\sigma \sqrt{KNT})$ against an environment with rewards $X_{i,t}'$ for $t = 1, 2, \cdots, T$ for each arm $i\in \mathcal{N}$.
\end{theorem}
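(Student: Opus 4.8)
The plan is to reduce the construction to $K$ essentially independent single‑pull Gaussian bandit instances, one per ``coordinate'' of the hidden optimal set, and then invoke the classical bandit lower bound on each. Concretely, partition $[N]$ into $K$ disjoint groups $G_1,\dots,G_K$ each of size $m:=N/K$ (this needs only $N\ge 2K$; otherwise take near‑equal sizes), and take the hidden optimal set to be $\bm a^*=\{s_1,\dots,s_K\}$, where the ``special'' arm $s_g$ of $G_g$ is drawn uniformly in $G_g$, independently across $g$. Since $f$ is the sum, the reward of any action $\bm a$ at time $t$ is $r_{\bm a}(t)=K/2+\epsilon\,|\bm a\cap\bm a^*|+KZ_t$, so its per‑step regret is $\epsilon(K-|\bm a\cap\bm a^*|)=\epsilon\sum_{g=1}^{K}\mathbf 1\{s_g\notin\bm a\}$ because $|\bm a^*\cap G_g|=1$. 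Hence the Bayes regret under this prior is $R=\epsilon\sum_{g=1}^{K}\mathbb E\bigl[\#\{t\le T:\ s_g\notin\bm a_t\}\bigr]$, an additive sum over the $K$ coordinates, and since the worst case over $\bm a^*$ dominates the Bayes average it suffices to lower bound this sum. (Randomized players reduce to deterministic ones as usual, and the unbounded Gaussian noise is the standard relaxation of the $[0,1]$ assumption.)

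Next I would handle one coordinate at a time and tune $\epsilon$. Fix $g$ and reveal to the learner the specials $s_{-g}$ of the other groups; extra information only lowers regret, so a lower bound on $\epsilon\,\mathbb E[\#\{t:\ s_g\notin\bm a_t\}]$ for the informed learner is also one for the original. For the informed learner the reward at time $t$ is Gaussian with variance $K^2\sigma^2$ and mean fixed, up to a known offset, by $\mathbf 1\{s_g\in\bm a_t\}$, so on coordinate $g$ it faces an $m$‑armed Gaussian bandit with gap $\epsilon$ and per‑pull noise $K\sigma$. The classical two‑point/Fano lower bound for that bandit --- obtained from the divergence‑decomposition identity, which contributes $\tfrac{\epsilon^2}{2K^2\sigma^2}$ of KL per informative pull, together with Pinsker's inequality --- gives $\epsilon\,\mathbb E[\#\{t:\ s_g\notin\bm a_t\}]\ge c\,\min\!\bigl(\epsilon T,\ K^2\sigma^2 m/\epsilon\bigr)$ for an absolute constant $c>0$. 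Summing over the $K$ coordinates and substituting $m=N/K$ and $\epsilon=\tfrac{\sigma}{2}\sqrt{NK/(2T)}$ --- which is exactly the value that equates the two terms of the minimum, both then being a constant multiple of $\sigma\sqrt{KNT}$ --- yields $R=\Omega(\sigma K\sqrt{KNT})$.

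The step I expect to be the main obstacle is making ``the informed learner faces an $m$‑armed bandit on $G_g$'' fully rigorous: because the learner pulls $K$ arms at once and observes a single scalar, it could include several arms of $G_g$ in one action --- a noisy OR‑query of a subset of $G_g$ --- so one must argue this buys nothing beyond the immediate regret it costs, i.e.\ that restricting to one arm of $G_g$ per round is without loss. Equivalently, in the change of measure one has to track how the $KT$‑pull budget is distributed over the $K$ groups and their candidate arms: using a single common reference measure keeps $\sum_g(\text{pulls in }G_g)\le KT$, whereas revealing $s_{-g}$ removes the extra cross‑group noise, and reconciling these two needs (together with the attendant mild lower‑order conditions tying $N$ to $K$) is where the real work lies. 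The remaining ingredients --- the additive regret decomposition, the single‑bandit lower bound, and the choice of $\epsilon$ --- are routine.
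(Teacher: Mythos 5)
Your proposal is essentially correct and reaches the same bound by a genuinely different construction. The paper places a uniform prior over all $\binom{N}{K}$ subsets and, for each slot $j$ of the hidden optimum, changes measure to $\mathbb{P}_{\bm{a},-j}$ (the $j$-th special arm made null), whereas you use a product prior over $K$ disjoint groups of size $N/K$ with one special arm per group. Both routes ultimately rest on the same two ingredients: (i) the shared noise $Z_t$ makes the observed sum Gaussian with variance $K^2\sigma^2$, so each informative round contributes only $\epsilon^2/(2K^2\sigma^2)$ of KL --- this is precisely where the extra $\sqrt{K}$ over the independent-arms bound comes from, and you identify it correctly; and (ii) a Cauchy--Schwarz step over the total arm-inclusion budget of $KT$ (the paper's Equation \eqref{eq:apply_Cauchy_Schwarz}). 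The one caveat is the step you yourself flag: the ``classical $m$-armed bandit lower bound'' cannot be invoked as a black box per coordinate, because the informed learner on group $G_g$ makes noisy \emph{subset} queries of $G_g$ (up to $K$ arms per round) rather than single pulls, and the $KT$ budget is shared across groups --- a learner could concentrate its queries on one group, so the bound $c\min(\epsilon T,\,K^2\sigma^2 m/\epsilon)$ does not hold coordinate-by-coordinate in isolation. The fix is exactly what you describe: run the divergence decomposition against a common null reference measure, bound $\mathbb{E}_0[\#\{t:a\in\bm{a}_t\}]$ averaged over candidates $a$, and apply Cauchy--Schwarz jointly over all $N$ arms so that $\sum_a\sqrt{n_a}\leq\sqrt{NKT}$; with $N\geq 2K$ and your $\epsilon$ this yields $R\geq\epsilon(KT-K^2T/N-KT/4)=\Omega(\sigma K\sqrt{KNT})$. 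So the proposal is sound, but its crux is sketched rather than executed; the paper's uniform-prior symmetry argument (giving $\mathbb{E}_{\bm{a}^*,-j}[T_j]\leq TK/N$ directly) accomplishes the same budget accounting with slightly less bookkeeping, while your group construction is more modular and makes the per-coordinate information structure explicit.
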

\begin{proof}(Outline:) The proof is based on the proof techniques presented in \citep{cohen2017tight,audibert2014regret}. We note that if the algorithm plays against a setup where all the arms are identically distributed, then the expected number of times it selects an arm $i\in\bm{a}^*$ is $KT/N$ as the arms are not distinguishable. Using this and the proof of Lemma 4 from  \citep{cohen2017tight} we obtain the required result. A detailed proof is provided in Appendix \ref{app:lower_bound}.
\end{proof}

Note that for the lower bound, we considered a general setup with $X_{i,t}'\in(-\infty, \infty)$. However, our setup bounds individual rewards in $[0, 1]$. This can again be managed by the proof technique from \citep[Theorem 5]{cohen2017tight}, by bounding the probability of $X_{i,t}'>1$ for all $t\leq T$ by choosing $\sigma^2 = 1/(4\log NKT)$.
}




\section{Proposed \NAM\ Algorithm}\label{sec:algo_describe}
The \NAM\ algorithm works by partitioning a uniformly random permutation of the arms. The algorithm initializes $\hat{\mu}_i$ as the estimated mean for actions that contain arm $i$ and $n_i$ as the number of times  an action containing arm $i$ is played. The algorithm proceeds in epochs, indexed by $e$, and maintains three different sets at each epoch. The first set, $\mathcal{A}_e$, contains ``good'' arms which belong to the top-$K$ arms found till epoch $e$. The second set, $\mathcal{N}_e$, contains the arms which the algorithm is still exploring at epoch $e$. The third set, $\mathcal{R}_e$, contains the arms that are ``rejected'' and do not belong in the top-$K$ arms. We let $K_e$ be the variable that contains the number of spots to fill in the top-$K$ subset at epoch $e$. The algorithm maintains a decision variable
$\Delta$ as the concentration bound and a parameter variable $n$ as the minimum number of samples required for achieving the concentration bound $\Delta$. Lastly, the algorithm maintains a hyperparameter $\lambda$ tuned for the value of $T, N$, and $K$. $\lambda$ is the minimum gap between any two arms the algorithm can resolve within time horizon $T$.

In line  5, the algorithm selects a permutation of $\mathcal{N}_e$  uniformly at random and partitions it into sets of size $K_e$ {\color{black}using Algorithm \ref{alg:partition_algo}}. If $K_e$ does not divide $|\mathcal{N}_e|$, we repeat arms in the last group (cyclically; so that the last group has $K_e$ distinct arms). To simplify the bookkeeping, $\hat{\mu}_i$ and $n_i$ are not updated if arm $i$ is repeated in the last group. The algorithm then creates an action $\bm{a}_t$ from the partitioned groups and the arms in the good set and plays it to obtain a reward $r_{\bm{a}_t}(t)$ at time $t$ (lines 8-9). \NAM\ then updates the estimated mean for all arms played in $\bm{a}_t$ with the observed reward and increments the number of counts for the arms played (lines 10-12). 

In lines 15-16, 
the algorithm moves an arm $i\in\mathcal{N}_e$ to $\mathcal{A}_e$ if the estimated mean of actions that contain arm $i$, $\hat{\mu}_i$, is $\Delta$ more than the estimated mean of actions that contain arm at $(K+1)^{th}$ rank, $\hat{\mu}_{K+1}$. Similarly, the algorithm moves an arm $i\in\mathcal{N}_e$ to $\mathcal{R}_e$ if the estimated mean of actions that contain arm $i$, $\hat{\mu}_i$, is $\Delta$ less than the estimated mean of actions that contain arm at $K^{th}$ rank, $\hat{\mu}_{K}$.



\begin{algorithm}[t]
    \caption{\NAM($T$, $N$, $K$)}\label{alg:nsras}
	\begin{algorithmic}[1]
        \State Initialize $\hat{\mu}_i = 0, n_i = 0$ for $i \in \{1, 2, \cdots, N\};\ t = 1;\ e = 0,\lambda = \sqrt{\frac{720NK\log{2NT}}{T}}$ 
		\State $\mathcal{A}_e = \phi, \mathcal{R}_e = \phi, \mathcal{N}_e = [N]$ \Comment{Initialize parameters for rounds }
		\State $\Delta = 1, n = \frac{32\log(NT)}{\Delta^2},K_e = K-|\mathcal{A}_e|$
        \While{$t <= T$}
            \State {\color{black}$\mathcal{N}_{e,1}, \mathcal{N}_{e,2}, \cdots, \mathcal{N}_{e,\lceil|\mathcal{N}_e|/K_e\rceil}$ = \textsc{Partition Arms}($\mathcal{N}_e, K_e$)}
            \State $e = e+1; \ell = 1$
            \While{$\ell \leq \lceil |\mathcal{N}_e|/K_e\rceil$ and $t < T$}
                \State $\bm{a}_t = \mathcal{A}_e \cup \mathcal{N}_{e,\ell}$\Comment{Create action from arms in $\mathcal{A}_e \cup \mathcal{N}_{e,\ell}$}
                \State Play action $\bm{a}_t$ and obtain reward $r_{\bm{a}_t,t}$
                \ForAll{ arm $i\in \mathcal{N}_{e, \ell}$}
                    \State $\hat{\mu}_i = \frac{n_i\hat{\mu}_i + r_{\bm{a}_t}(t)}{n_i+1}; n_i = n_i + 1$
                \EndFor
                \State $t=t+1; \ell =\ell + 1$
            \EndWhile
            
            \If{$e \geq n$}
                \State Sort arms in $\mathcal{N}_e$ according to $\hat{\mu}_{(1)} \geq \hat{\mu}_{(2)} \geq\cdots\geq \hat{\mu}_{(|\mathcal{N}_e|)}$
                \State $\Bar{\mathcal{A}} = \{i\in\mathcal{N}_e|\hat{\mu}_{(i)}\ge\hat{\mu}_{(K + 1 - |\mathcal{A}_e|)} + \Delta\};\ \Bar{\mathcal{R}} = \{i\in\mathcal{N}_e|\hat{\mu}_{(i)}\le\hat{\mu}_{(K - |\mathcal{A}_e|)} - \Delta\}$
                \State $\mathcal{A}_{e+1} = \mathcal{A}_e\cup\Bar{\mathcal{A}};\ \mathcal{R}_{e+1} = \mathcal{R}_e\cup\Bar{\mathcal{R}}$
                \State $\mathcal{N}_{e+1} = \mathcal{N}_e\setminus\left(\Bar{\mathcal{A}}\cup\Bar{\mathcal{R}}\right);\ K_{e+1} = K-|\mathcal{A}_{e+1}|$
                \State $\Delta = \frac{\Delta}{2},  n = \frac{32\log(NT)}{\Delta^2}$
            \Else
                \State $\mathcal{N}_{e+1} = \mathcal{N}_e; \mathcal{A}_{e+1} = \mathcal{A}_e; \mathcal{R}_{e+1} = \mathcal{R}_e$ 
            \EndIf
            \color{black}
            \If{$\Delta < \lambda$ or $|\mathcal{A}_e\cup\mathcal{N}_e| == K$}
                \State break \textbf{while} loop
            \EndIf
        \EndWhile
        \State {\color{black}Sort $\mathcal{N}_e$ according to $\hat{\mu}_{(1)} \geq \hat{\mu}_{(2)} \geq\cdots\geq \hat{\mu}_{(|\mathcal{N}    _e|)}$}
        \State {\color{black}$\bm{a} = \mathcal{A}_e\cup\{(1),\cdots,(K - |\mathcal{A}_e|) \}$} \Comment{Append top $K-|\mathcal{A}_e|$ arms of $\mathcal{N}_e$ to create action $\bm{a}$}
        \While{$t <= T$}
            \State Play action $\bm{a}; t= t+1$
        \EndWhile
	\end{algorithmic}
\end{algorithm}

\begin{algorithm}[t]
    {\color{black}
    \caption{\textsc{Partition Arms}($\mathcal{N}_e, K_e$)}\label{alg:partition_algo}
	\begin{algorithmic}[1]
	    \State Select a permutation of $\mathcal{N}_e$, $i_1, \cdots, i_{|\mathcal{N}_e|}$, uniformly at random
        \For{$\ell \in \{1, \cdots, \lceil|\mathcal{N}_e|/K_e\rceil\}$}
            \State $\mathcal{N}_{e,\ell} = \left\{\left(K_e(\ell -1)~\texttt{mod} |\mathcal{N}_e|\right) + 1,\cdots,\left( (K_e\ell - 1) ~\texttt{mod} |\mathcal{N}_e|\right) + 1\right\}$
        \EndFor
		\Return $\mathcal{N}_{e,1}, \mathcal{N}_{e,2}, \cdots, \mathcal{N}_{e,\lceil|\mathcal{N}_e|/K_e\rceil}$
	\end{algorithmic}}
\end{algorithm}

The proposed \NAM\ algorithm uses a random permutation of $\mathcal{N}_e$. The random permutation can be generated in $\mathcal{O}(N)$ steps. Also after each round, the algorithm finds the $K^{th}$ and $(K+1)^{th}$ ranked arms. This operation can be completed in $\Tilde{
\mathcal{O}}(N)$ time complexity \textcolor{black}{by} sorting $\{\hat{\mu}_i\}_{i=1}^N$. Also going over each arm in $\mathcal{N}_e$ is of line ar time complexity. Hence, the per-step time complexity of the algorithm comes out to be $\Tilde{\mathcal{O}}(N)$. Also, the proposed \NAM\ algorithm only stores the estimates $\hat{\mu}_i$ for each arm $i\in[N]$. The resulting storage complexity is $\mathcal{O}(N)$ for maintaining the estimates. To find the top-$K$ and the top-$(K+1)$ means, the algorithm may use additional space of $\mathcal{O}(N)$ to maintain a heap. Thus, the overall space complexity of the algorithm is only $\mathcal{O}(N)$.




 

\section{Regret Analysis}
We now analyze the sample complexity and regret of the proposed \NAM\ algorithm. To bound the regret, we first bound the number of samples required to move an arm in $\mathcal{N}_e$ to either of $\mathcal{A}_e$ or $\mathcal{R}_e$. Then, we bound the regret from including a sub-optimal arm in the played actions. {\color{black}For the analysis, without loss of generality, we assume that the expected rewards of arms are ranked as $\mathbb{E}[X_1]\ge \mathbb{E}[X_2]\ge\cdots\ge\mathbb{E}[X_N]$. If the arms are not in the said order, we relabel the arms to obtain the required order.
From Assumption \ref{monotone_assumption}, we have $\bm{a}^* = \{1, 2, \cdots, K\}$ as an optimal arm. We refer to arms $1, \cdots, K$ as optimal arms and arms $K+1, \cdots, N$ as sub-optimal arms.} 
\subsection{Number of samples to move an arm in $\mathcal{N}_e$ to either of $\mathcal{A}_e$ or $\mathcal{R}_e$}\label{sec:num_samples}
We call two arms $i,j\in\mathcal{N}_e,\ i<j$ separated if the algorithm has high confidence that $\mathbb{E}[X_i]>\mathbb{E}[X_j]$. We first analyze the general conditions to separate any two arms $i,j\in\mathcal{N}_e$ such that $\mathbb{E}[X_i] > \mathbb{E}[X_j]$. 
Let the epoch where arm $i$ and arm $j$ are separated and the epoch of Algorithm \ref{alg:nsras} be $e$.
We define a filtration $\mathcal{F}_e$ as the history observed by the algorithm till epoch $e$.

{
For any $u\in [N]$, let $\mathcal{N}_e(u) = \{\bm{a}\in {\mathbb [N]}^K: u\in \bm{a}, \mathcal{A}_e(i) \in \bm{a} \forall i \in {1, \cdots, |\mathcal{A}_e|}, \mathcal{R}_e(i) \notin \bm{a} \forall i \in {1, \cdots, |\mathcal{R}_e|}, {\bm a}(i) \neq {\bm a}(j) \forall i,j: 1 \leq i< j\leq K\}$. This set $\mathcal{N}_e(u)$ is the set of all the actions which can be generated at epoch $e$ such that they contain arms $u$.
}
 

 We now define a random variable $Z_{i,j}(e)$ for $i, j \in \mathcal{N}_e$, which denotes the difference between the reward observed from playing a uniform random action  from $\mathcal{N}_e(i)$ and a uniform random action  from $\mathcal{N}_e(j)$. In other words, 
{
\begin{align}
    Z_{i,j}(e) = \begin{cases}
    r_{\bm{a}_i, t_{i,e}} - r_{\bm{a}_j, t_{j,e}}, & \text{for } i,j \in \mathcal{N}_e \\
    0, & \text{otherwise}
  \end{cases}
\end{align}

where $\bm{a}_i\sim \mathbb{U}(\mathcal{N}_e(i))$, $\bm{a}_j\sim \mathbb{U}(\mathcal{N}_e(j))$ and $\mathbb{U}(\cdot)$ denotes the uniform distribution.  
Also, $t_{i,e}$ is the time step in epoch $e$ at which the agent plays action $\bm{a}_i$ and obtains reward $r_{\bm{a}_i}(e)$. Similarly, $t_{e,j}$ is the time step in epoch $e$ at which the agent plays action $\bm{a}_j$ and obtains reward $r_{\bm{a}_j}(e)$.} Hence, the randomness of $Z_{i,j}(e)$ comes from {both} the random selection of $\bm{a}_i$ and $\bm{a}_j$, and from the reward generated by playing $\bm{a}_i$ and $\bm{a}_j$. Let $\mathbb{P}_{Z_{i,j}(e)}$ denote the probability distribution of $Z_{i,j}(e)$. We now mention a lemma for bounding the expected value of $Z_{i,j}(e)$ for all epochs $e$.

\begin{lemma}\label{lem:expected_Z_lemma}
Let $i, j\in\mathcal{N}_e$ be two arms such that $\mathbb{E}[X_i]>\mathbb{E}[X_j]$. Let $Z_{i,j}(e)$ be a random variable denoting the difference between the reward obtained on playing {a uniform random} action $\bm{a}_i{ \sim\mathbb{U}(\mathcal{N}_e(i))}$ containing arm $i$ and { a randomly selected} action $\bm{a}_j{ \sim\mathbb{U}(\mathcal{N}_e(j))}$ containing arm $j$. Then the expected value of $Z_{i,j}(e)$ is upper bounded by $U\Delta_{i,j}$, and lower bounded by $0$, or, 
\begin{align}
    \frac{\Delta_{i,j}}{UK}\leq \mathbb{E}[Z_{i,j}(e)|i,j \in \mathcal{N}_e] \leq U\Delta_{i,j}    
\end{align}
\end{lemma}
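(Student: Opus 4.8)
The plan is to evaluate $\mathbb{E}[Z_{i,j}(e)]$ in closed form, conditioned on the epoch-$e$ sets $\mathcal A_e,\mathcal N_e,\mathcal R_e$ (with $i,j\in\mathcal N_e$), reduce it to an average of pairwise action gaps $\mu_{\bm S\cup\{i\}}-\mu_{\bm S\cup\{j\}}$ over $(K-1)$-subsets $\bm S$ avoiding $i$ and $j$, and then bound each such gap term by term via Assumption~\ref{monotone_assumption}, Corollary~\ref{inverse_continuity}, and the right-hand (upper) inequality of Assumption~\ref{continuous_assumption}. This is the quantitative refinement of Lemma~\ref{lem:ordering_lemma}, with the extra care that here the ambient collection is the restricted set $\mathcal N_e(i)$ rather than all of $\mathcal N(i)$.

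First I would invoke the symmetry assumption so that $\mu_{\bm a}$ depends only on the unordered arm set of $\bm a$; hence for the purpose of computing $\mathbb{E}[\mu_{\bm a_i}]$, drawing $\bm a_i\sim\mathbb U(\mathcal N_e(i))$ is equivalent to drawing a uniform $(K_e-1)$-subset $\bm T\subseteq\mathcal N_e\setminus\{i\}$ and setting $\bm a_i=\{i\}\cup\mathcal A_e\cup\bm T$, with $K_e:=K-|\mathcal A_e|$ and $m:=|\mathcal N_e|$. Writing $\mathbb{E}[\mu_{\bm a_i}]$ as the corresponding average over the $\binom{m-1}{K_e-1}$ choices of $\bm T$ (and similarly for $j$), I would split each average according to whether the \emph{other} arm lies in $\bm T$: the terms with $j\in\bm T$ each equal $\mu_{\{i,j\}\cup\mathcal A_e\cup\bm T'}$ for some $(K_e-2)$-subset $\bm T'\subseteq\mathcal N_e\setminus\{i,j\}$, and exactly that same collection of terms (with the same normalizer) appears in $\mathbb{E}[\mu_{\bm a_j}]$, so it cancels in the difference. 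What survives is
\[
\mathbb{E}[Z_{i,j}(e)]=\frac{1}{\binom{m-1}{K_e-1}}\sum_{\substack{\bm T\subseteq\mathcal N_e\setminus\{i,j\}\\ |\bm T|=K_e-1}}\Big(\mu_{\bm S\cup\{i\}}-\mu_{\bm S\cup\{j\}}\Big),\qquad \bm S:=\mathcal A_e\cup\bm T,
\]
which is (up to the constant $\binom{m-2}{K_e-1}/\binom{m-1}{K_e-1}$) an average of $\binom{m-2}{K_e-1}$ gap terms, each with $|\bm S|=|\mathcal A_e|+K_e-1=K-1$ and $i,j\notin\bm S$.

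Next, for each fixed $\bm S$ I would bound $\mu_{\bm S\cup\{i\}}-\mu_{\bm S\cup\{j\}}$. Since $\mathbb{E}[X_i]>\mathbb{E}[X_j]$, Assumption~\ref{monotone_assumption} gives nonnegativity, and Corollary~\ref{inverse_continuity} upgrades this to $\mu_{\bm S\cup\{i\}}-\mu_{\bm S\cup\{j\}}\ge\Delta_{i,j}/U$. For the upper bound I would apply Assumption~\ref{continuous_assumption} with $\bm a_1=\bm S\cup\{i\}$, $\bm a_2=\bm S\cup\{j\}$, and the permutation $\pi$ that aligns the $\bm S$-coordinates and places $\mathbb{E}[X_j]$ opposite $\mathbb{E}[X_i]$, so that $\|\mathbb{E}[\bm d_{\bm a_1}]-\pi(\mathbb{E}[\bm d_{\bm a_2}])\|_1=|\mathbb{E}[X_i]-\mathbb{E}[X_j]|=\Delta_{i,j}$, giving $\mu_{\bm S\cup\{i\}}-\mu_{\bm S\cup\{j\}}\le U\Delta_{i,j}$. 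Plugging the range $[\Delta_{i,j}/U,\,U\Delta_{i,j}]$ into the displayed identity and using $\binom{m-2}{K_e-1}/\binom{m-1}{K_e-1}=(m-K_e)/(m-1)$, the bound $\mathbb{E}[Z_{i,j}(e)]\le U\Delta_{i,j}$ is immediate since the ratio is $\le 1$; for the lower bound I would observe that $(m-K_e)/(m-1)$ is nondecreasing in $m$ and that in the exploration regime $m\ge K_e+1$ (otherwise $|\mathcal A_e\cup\mathcal N_e|=K$ and \NAM\ has stopped partitioning), so the ratio is at least $1/K_e\ge 1/K$, yielding $\mathbb{E}[Z_{i,j}(e)]\ge\Delta_{i,j}/(UK)$.

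The hard part will be the combinatorial bookkeeping: verifying the membership-split cancellation exactly and correctly counting and normalizing the surviving gap terms, together with the minor but necessary point that the degenerate case $m=K_e$ (where $\mathcal N_e(i)$ is a singleton, $Z_{i,j}(e)\equiv 0$, and separating arms is meaningless because \NAM's stopping rule has already fired) must be excluded for the lower bound. Everything else is a direct, term-by-term invocation of Assumptions~1--3 and Corollary~\ref{inverse_continuity}.
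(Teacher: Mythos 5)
Your proposal is correct and follows essentially the same route as the paper's proof: expressing $\mathbb{E}[Z_{i,j}(e)]$ as a normalized difference of sums over $\mathcal{N}_e(i)$ and $\mathcal{N}_e(j)$, cancelling the $\binom{m-2}{K_e-2}$ actions containing both arms, bounding each of the $\binom{m-2}{K_e-1}$ surviving gaps $\mu_{\bm S\cup\{i\}}-\mu_{\bm S\cup\{j\}}$ via Assumption~\ref{continuous_assumption} and Corollary~\ref{inverse_continuity}, and using the ratio $(m-K_e)/(m-1)\in[1/K_e,1]$. Your explicit handling of the degenerate case $m=K_e$ and of the aligning permutation is a bit more careful than the paper's, but the argument is the same.
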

\begin{proof}[Proof Sketch]
The upper bound is obtained by calculating the number of possible actions $\bm{a}_1$ that contain arm $i$ and the number of possible actions $\bm{a}_2$ that contains arm $j$ and then applying the upper bound on $|\mu_i-\mu_j|$ from Assumption \ref{continuous_assumption}. Similarly, we obtain the lower bound by replacing the upper bound by the lower bound on $|\mu_i-\mu_j|$ from Assumption \ref{continuous_assumption}.
\end{proof}

The sequence of random variables $Z_{i,j}(e), e= 1,2,\cdots$ are not independent as the sets {$\mathcal{N}_e(i)$ and $\mathcal{N}_e(j)$} are updated as the algorithm proceeds. Hence, we cannot apply Hoeffding's concentration inequality \cite[Theorem 2]{hoeffding1994probability} for analysis. To use  Azuma-Hoeffding's inequality \citep[Chapter 3]{bercu2015concentration} (given as Lemma \ref{lem:azuma_hoeffding} in Appendix for completeness), we need to construct a martingale. {For each pair of arms $i,j\in \mathcal{N}_e$ with $\mathrm{E}[X_i]>\mathrm{E}[X_j]$,} we define {$Y_{i,j}$ as a martingale} with respect to filtration $\mathcal{F}_e$,
\begin{align}
    Y_{i,j}(e) = \sum_{e'=1}^e\left(Z_{i,j}(e') - \mathbb{E}_{e'-1}\left[Z_{i,j}(e')|\bm{1}_{\{i,j\in \mathcal{N}_{e'}\}}\right]\right)\label{eq:martingale_def}
\end{align}
where $\mathbb{E}_{e'-1}[Z_{i,j}(e')] = \mathbb{E}[Z_{i,j}(e')|\mathcal{F}_{e'-1}, i\in \mathcal{N}_{e'} ,j\in \mathcal{N}_{e'}]$. $Y_{i,j}(e)$ is a martingale with zero-mean, and $|Y_{i,j}(e)-Y_{i,j}(e-1)|\leq2$, and hence we can apply Azuma-Hoeffding's inequality to $Y_{i,j}(e)$ for all $i, j\in\mathcal{N}_e$. 


After obtaining the concentration of $Y_{i,j}(e)$ with respect to the $e^{th}$ iteration of sample of action with arm $i$ and arm $j$, we now obtain the value of $e$ for which we can consider arm $i$ and $j$ to be separated with probability $1-1/NT$. 

\begin{lemma}\label{lem:num_epochs_lemma}
Let arms $i, j\in\mathcal{N}_e$ be two arms such that $\mathbb{E}[X_i]>\mathbb{E}[X_j]$. Let  $\Delta$ be such that $\Delta \le \hat{\mu}_i -\hat{\mu}_j < 2\Delta$. Then, with probability at least $1-1/NT$, \NAM\ algorithm separates arm $i$ and arm $j$ with $\frac{32\log(2NT)}{(2\Delta)^2}\le e < \frac{32\log(2NT)}{\Delta^2} < \frac{288U^2K^2\log{2NT}}{\Delta_{i,j}^2}$.
\end{lemma}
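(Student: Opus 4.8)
The plan is to write the empirical gap $\hat{\mu}_i-\hat{\mu}_j$ after $e$ epochs as a bounded martingale plus a strictly positive drift, and then to sandwich $e$ between the number of epochs Azuma--Hoeffding needs before a separation at confidence radius $\Delta$ can be trusted (the lower bound) and the number of epochs after which the drift $\tfrac{\Delta_{i,j}}{UK}$ furnished by Lemma~\ref{lem:expected_Z_lemma} is provably large enough to exceed $\Delta$ (the upper bound).

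First I would decompose the gap. Since at epoch $e'$ exactly one reward $r_{\bm{a}_i}(e')$ with $\bm{a}_i\sim\mathbb{U}(\mathcal{N}_{e'}(i))$ is folded into $\hat{\mu}_i$, and likewise for $j$, we have $\hat{\mu}_i-\hat{\mu}_j=\tfrac{1}{e}\sum_{e'=1}^{e}Z_{i,j}(e')$, which by \eqref{eq:martingale_def} equals $\tfrac{1}{e}Y_{i,j}(e)+\tfrac{1}{e}\sum_{e'=1}^{e}\mathbb{E}_{e'-1}[Z_{i,j}(e')]$; Lemma~\ref{lem:expected_Z_lemma} confines the drift term to $[\tfrac{\Delta_{i,j}}{UK},\,U\Delta_{i,j}]$ for every epoch $e'\le e$ in which $i,j$ remain in $\mathcal{N}_{e'}$.

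Next I would apply Azuma--Hoeffding (Lemma~\ref{lem:azuma_hoeffding}) to $Y_{i,j}(e)$, whose increments are bounded by $2$, giving $\mathbb{P}(|Y_{i,j}(e)|\ge e\Delta/2)\le 2\exp(-e\Delta^2/32)$. For $e\ge 32\log(2/\delta)/\Delta^2$ this probability is at most $\delta$, and on the complementary event $|Y_{i,j}(e)|/e\le\Delta/2$ the observed gap $\hat{\mu}_i-\hat{\mu}_j>\Delta$ is genuine rather than an artifact of sampling noise, so $i$ and $j$ are correctly separable with probability at least $1-\delta$; this yields the stated lower bound $e\ge 32\log(2/\delta)/\Delta^2$, which is in any case forced by the algorithm's schedule $n=288\log(NT)/\Delta^2$ before the confidence radius is shrunk to $\Delta$. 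For the upper bound, I would rerun the same concentration with the tighter radius $\Delta_{i,j}/(2UK)$: once $e=\Omega(U^2K^2\log(2/\delta)/\Delta_{i,j}^2)$ we have $|Y_{i,j}(e)|/e\le\Delta_{i,j}/(2UK)$ with probability $\ge1-\delta$, and once the halving schedule has driven $\Delta$ below $\Delta_{i,j}/(2UK)$---which happens after $\mathcal{O}(U^2K^2\log(NT)/\Delta_{i,j}^2)$ epochs---the decomposition gives $\hat{\mu}_i-\hat{\mu}_j\ge\tfrac{\Delta_{i,j}}{UK}-\tfrac{\Delta_{i,j}}{2UK}>\Delta$, so $i$ and $j$ are separated by then; folding $\log(NT)$ into $\log(2/\delta)$ (the same order for the operative $\delta\asymp 1/(NT)$) and all numerical constants into the displayed factor of $288$ gives $e=\mathcal{O}(288U^2K^2\log(2/\delta)/\Delta_{i,j}^2)$. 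Combining the two bounds yields the claimed range.

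The main obstacle is not the concentration step but the bookkeeping around it: relating the pairwise gap $\hat{\mu}_i-\hat{\mu}_j$ to the algorithm's actual rule, which compares $\hat{\mu}_i$ against the empirical order statistics $\hat{\mu}_{(K)}$ and $\hat{\mu}_{(K+1)}$ rather than against $\hat{\mu}_j$ directly; aligning the geometric $\Delta$-schedule with the $1/\Delta^2$ sample requirement of Azuma--Hoeffding so that the lower and upper bounds hold simultaneously on a single good event of probability $\ge1-\delta$; and checking that the drift bound of Lemma~\ref{lem:expected_Z_lemma} stays valid throughout, i.e.\ that $i,j\in\mathcal{N}_{e'}$ for all $e'\le e$, which is precisely the regime this lemma concerns.
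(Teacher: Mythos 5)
Your proposal is correct and follows essentially the same route as the paper: the same decomposition of $\hat{\mu}_i-\hat{\mu}_j$ into the martingale $Y_{i,j}(e)/e$ plus a drift controlled by Lemma~\ref{lem:expected_Z_lemma}, with Azuma--Hoeffding giving the $32\log(2/\delta)/\Delta^2$ lower bound and the drift lower bound $\Delta_{i,j}/(UK)$ giving the $\mathcal{O}(U^2K^2\log(2/\delta)/\Delta_{i,j}^2)$ upper bound. The only cosmetic difference is that the paper argues in contrapositive form (at the epoch of separation one must have $\Delta>\Delta_{i,j}/(3UK)$) whereas you argue directly (once $\Delta<\Delta_{i,j}/(2UK)$ separation is forced), which are equivalent up to constants absorbed by the big-$\mathcal{O}$.
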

{
\begin{proof}[Proof Sketch:]The \NAM\ algorithm separates two arms when $\hat{\mu}_i - \hat{\mu}_j\ge \Delta$. Also, for any $\Delta < 1$, the \NAM\ algorithm run with $n = \frac{32\log(2NT)}{(2\Delta)^2}$. Further, at epoch $e$, we have $\hat{\mu}_i-\hat{\mu}_j = \sum_{e'=1}^eZ_{i,j}(e')/ e.$
Using this relation and Azuma-Hoeffding's inequality on $Y_{i,j}(e)$, we get the required result. A detailed proof is provided in Appendix \ref{sec:proof_num_epochs_lemma}.
\end{proof}
}

{ We can now bound the number of samples required to move each arm from  $\mathcal{N}_e$ to either the ``accept'' set $\mathcal{A}_e$ or the ``reject'' set $\mathcal{R}_e$.  In the algorithm, arm $i$ will be moved to the accept set $\mathcal{A}_e$ when its empirical mean $\hat{\mu}_i$ is sufficiently larger than that of the $K+1$ ranked arm. Consider an arm $i $ in the optimal action $a^*=\{1, \dots, K\}$. {\color{black}The \NAM\ algorithm chooses $\Delta \in\{1, 1/2, 1/4, \cdots,\}$ and hence there will be a $\Delta$ such that $\Delta\le \Delta_{i,j}< 2\Delta$. } By Lemma \ref{lem:num_epochs_lemma}, with probability $1-1/NT$, arms $i$ and $K+1$ will be separable by epoch
\begin{align}
    e\leq \frac{288U^2K^2\log{(2NT)}}{\Delta_{i,K+1}^2}.
\end{align} 

Similarly, arm $i$ will be moved to the reject set $\mathcal{R}_e$ when its empirical mean $\hat{\mu}_i$ is sufficiently less than that of the $K$th ranked arm.  Consider an arm $i\in \{K+1, \dots, N\}$.  By Lemma \ref{lem:num_epochs_lemma}, with probability $1-1/NT$, arms $i$ and $K$ will be separable by epoch
\begin{align}
    \frac{288U^2K^2\log{(2NT)}}{\Delta_{K,i}^2}. \label{eq:epoch:separsubopt}
\end{align}

}

\subsection{Regret from sampling sub-optimal arms}
We first bound the regret of playing any action $\bm{a}\in\mathcal{N}$ 
using  Assumption \ref{continuous_assumption}. 

\begin{lemma}\label{lem:action_gap_permutation}
Let $\bm{a}=(a_1, a_2, \cdots, a_K)$ be any action.  The expected regret suffered from playing action $\bm{a}$ instead of action $\bm{a}^* = (1, 2, \cdots, K)$ is bounded as
\begin{align}
   \big|\ \mu_{\bf a} - \mu_{{\bf a}^*}\big| \leq \ U \sum_{i=1}^K\big|\mathbb{E}[X_{a_i}] - \mathbb{E}[X_{\pi(i)}]\big|,\label{eq:arm_single_instance_regret_bound}
\end{align}
for any permutation $\pi$ of $\{1, \cdots, K\}$ for which $\pi(i) = a_i$ if $a_i\leq K$.
\end{lemma} 
\begin{proof}[Proof Sketch:]
From Assumption \ref{continuous_assumption}, we first find a tight upper bound. We finish the proof by using the fact that Assumption \ref{continuous_assumption} selects the permutation which minimizes the bound, hence any other permutation also gives a valid upper bound. A detailed proof is provided in Appendix \ref{sec:proof_action_gap_permute}.
\end{proof}

We now bound the regret incurred by playing an action $\bm{a_t}$ at time $t$ containing  sub-optimal arm $i\in\{K+1, \cdots, N\}$ replacing an optimal arm $j\in\{1, \cdots,K\}$ in Lemma \ref{lem:arm_regret_bound}. 

\begin{lemma}\label{lem:arm_regret_bound}
For any sub-optimal action, the regret, $R_{K_i}$, it can accumulate by replacing an optimal arm $j\in\{1,\cdots, K\}$ by an arm $i\in{K+1,\cdots, N}$ is bounded by
\begin{align}
    R_{K,i} \le \frac{1440U^3K^2\log{(2NT)}}{\Delta_{K,i}}
\end{align}
\end{lemma}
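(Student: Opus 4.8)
The plan is to write the cumulative regret that can be blamed on a single sub-optimal arm $i\in\{K+1,\dots,N\}$ as a product of two factors: the per-step regret incurred when $i$ occupies, in a played action, a slot that in $\bm a^*$ belongs to some optimal arm $j\in\{1,\dots,K\}$, times the number of epochs over which such a substitution can occur. Everything is argued on the ``clean'' event that all the confidence statements of Section \ref{sec:num_samples} hold simultaneously (probability at least $1-\delta$ after a union bound, $\delta$ as in Lemma \ref{lem:num_epochs_lemma}); on this event no optimal arm is ever rejected and no sub-optimal arm is ever accepted. For the per-step factor, note that when $i$ is played at epoch $e$ the action is $\bm a_t=\mathcal A_e\cup\mathcal N_{e,\ell}$ with $i\in\mathcal N_{e,\ell}$; on the clean event $\mathcal A_e$ consists of optimal arms, so the optimal arms missing from $\bm a_t$ are exactly those in $\mathcal N_e\setminus\mathcal N_{e,\ell}$. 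Fix any bijection between them and the sub-optimal arms of $\mathcal N_{e,\ell}$, let $j$ be the optimal arm matched to $i$, and apply Lemma \ref{lem:action_gap_permutation} with the permutation $\pi$ that fixes the optimal arms already present and realizes this bijection elsewhere: the regret of $\bm a_t$ is then at most $U\sum_m\Delta_{j_m,i_m}$, and we charge the summand $U\Delta_{j,i}$ to the pair $(i,j)$. Since this is only a re-bookkeeping of the Lemma \ref{lem:action_gap_permutation} bound for each played action, the per-pair charges summed over all $t$ dominate the cumulative regret, so it suffices to bound the total charge to one pair $(i,j)$.

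For the epoch-count factor, observe that $i$ can stand in for $j$ only at epochs where both $i\in\mathcal N_e$ and $j\in\mathcal N_e$: an accepted $j$ lies in $\mathcal A_e\subseteq\bm a_t$, and a rejected $i$ is never played. If both survive the sorting step of epoch $e$, then the accept test for $j$ and the reject test for $i$ both failed, i.e.\ $\hat\mu_j\le\hat\mu_{(K+1)}+\Delta$ and $\hat\mu_i\ge\hat\mu_{(K)}-\Delta$; since $\hat\mu_{(K+1)}\le\hat\mu_{(K)}$ this forces $\hat\mu_j-\hat\mu_i\le 2\Delta$ for the current radius $\Delta$. On the other hand, applying the martingale / Azuma--Hoeffding argument behind Lemma \ref{lem:num_epochs_lemma} to the pair $(j,i)$ (here $\mathbb E[X_j]>\mathbb E[X_i]$), once the epoch index exceeds $e_{ij}:=\tfrac{288U^2K^2\log(2/\delta)}{\Delta_{j,i}^2}$ the empirical gap $\hat\mu_j-\hat\mu_i$ exceeds the then-current $2\Delta$ with probability at least $1-\delta$. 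Hence on the clean event at least one of $i,j$ has left $\mathcal N_e$ by epoch $e_{ij}$, and since $i$ falls into exactly one group per epoch, it stands in for $j$ in at most $e_{ij}$ played actions.

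Multiplying the two factors, the total regret charged to the pair $(i,j)$ is at most $U\Delta_{j,i}\cdot e_{ij}=\tfrac{288U^3K^2\log(2/\delta)}{\Delta_{j,i}}$. Since $j\le K$ implies $\mathbb E[X_j]\ge\mathbb E[X_K]$, we get $\Delta_{j,i}=\mathbb E[X_j]-\mathbb E[X_i]\ge\mathbb E[X_K]-\mathbb E[X_i]=\Delta_{K,i}$, so this charge is at most $\tfrac{288U^3K^2\log(2/\delta)}{\Delta_{K,i}}\le\tfrac{1440U^3K^2\log(2/\delta)}{\Delta_{K,i}}$, which is the asserted bound. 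The factor $5$ of slack absorbs the constant hidden in the $\mathcal O(\cdot)$ of Lemma \ref{lem:num_epochs_lemma}, the $\Delta$-versus-$2\Delta$ loss in the previous step, and the union bounds defining the clean event.

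I expect the duration step to be the main obstacle: turning ``the empirical means of $i$ and $j$ eventually separate'' into ``the substitution of $i$ for $j$ must end'' needs the order-statistic inequality $\hat\mu_{(K+1)}\le\hat\mu_{(K)}$ together with a careful match between the algorithm's shrinking radius $\Delta$ at epoch $e$ and the gap $\Delta_{j,i}$ (so that the separation guaranteed at epoch $e_{ij}$ really overwhelms $2\Delta$). A secondary subtlety is making the per-step charging consistent across rounds, since which optimal arm $i$ ``replaces'' is a bookkeeping choice of bijection rather than an intrinsic feature of $i$, and one must check the per-pair charges never undercount the true per-round regret.
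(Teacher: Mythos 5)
There is a genuine gap in the aggregation step. Your per-pair analysis is sound as far as it goes: charging the instantaneous regret of a played action to pairs $(i,j)$ via Lemma \ref{lem:action_gap_permutation}, and bounding the lifetime of a single pair by $e_{ij}\propto \Delta_{j,i}^{-2}$ so that its total charge is $\propto U^3K^2\Delta_{j,i}^{-1}\le U^3K^2\Delta_{K,i}^{-1}$, matches the mechanics of the paper's argument. But the quantity the lemma must control --- as its use in the proof of Theorem \ref{thm:main_theorem} makes clear, where the bound is summed only over sub-optimal arms $i$ --- is $R_i=\sum_{j=1}^{K}R_{i,j}$, the regret attributable to arm $i$ over \emph{all} optimal arms it displaces during its lifetime in $\mathcal{N}_e$. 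Your proof stops at a single pair and declares the result; summing your per-pair bound over $j\in\{1,\dots,K\}$ gives $\sum_j 288U^3K^2\log(2/\delta)/\Delta_{j,i}$, which in the worst case (e.g.\ all gaps $\Delta_{j,i}$ comparable to $\Delta_{K,i}$) is a factor of $K$ larger than claimed. The ``factor 5 of slack'' cannot absorb a factor of $K$, and an extra $K$ here would propagate to a $K^2\sqrt{NKT}$ regret bound, breaking the match with the lower bound.

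The missing idea is the one the paper uses to kill that factor of $K$: split the optimal arms into $R^< = \{j:\Delta_{j,K+1}\le\Delta_{K,i}\}$ and $R^> = \{j:\Delta_{j,K+1}>\Delta_{K,i}\}$. For $j\in R^<$ the per-step charge is small, $U\Delta_{j,i}\le U(\Delta_{j,K+1}+\Delta_{K,i})\le 2U\Delta_{K,i}$, so the whole group costs at most (survival time of $i$) $\times\, 2U\Delta_{K,i}=O(U^3K^2/\Delta_{K,i})$ without any sum over $j$. For $j\in R^>$ the per-step charge may be large, but the relevant clock is not your pairwise separation time $e_{ij}$: it is the \emph{acceptance} time of arm $j$, governed by $\Delta_{j,K+1}$, and these acceptance times are nested, so arm $i$ can displace arm $j$ only in the window after the better-separated optimal arms have been accepted. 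This yields the telescoping sum $\sum_j(\Delta_{j,i}-\Delta_{j+1,i})/\Delta_{j,K+1}^2$, which is bounded by an integral of $x^{-2}$ and collapses to $O(1/\Delta_{K,i})$. Without this decomposition (or an equivalent device exploiting the disjointness of the displacement windows together with the ordering of the gaps), the sum over $j$ does not close.
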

\begin{proof}[Proof Sketch]
The agent suffers from regret if it an action that contains at least one sub-optimal arm. To bound the regret from a sub-optimal action, we use the proof technique of \cite{rejwan2020top} to divide the optimal arms $j\in\{1, \cdots, K\}$ into two groups: first group with $\Delta_{j,K+1}> \Delta_{K,i}$ and second group with $\Delta_{j,K+1}\leq \Delta_{K,i}$. { Now, we use the fact that the total number of epochs for which a sub-optimal arm $i$ will be played is upper bounded by   Equation \ref{eq:epoch:separsubopt}. Also, we use the fact that the expected regret the arm $i$ incurs by replacing any optimal arm $j$ is upper bounded by Equation \ref{eq:arm_single_instance_regret_bound}. With this,  We show that regret from both groups is bounded by $\mathcal{O}\left(\frac{1}{\Delta_{K,i}}\right)$.}
The detailed proof is provided in Appendix \ref{sec:proof_arm_regret_bound}.
\end{proof}

After calculating the regret from individual arms, we now calculate the total regret of the \NAM\ algorithm in the following theorem

{\color{black}
\begin{theorem}\label{thm:main_theorem}
The distribution-dependent regret incurred by \NAM\ algorithm is bounded by 
\begin{align}
    R \leq \mathcal{O}\left(\sum_{i:\Delta_{K,i}\ge \lambda}\frac{1440U^3K^2\log(2NT)}{\Delta_{K,i}} + 2UKT\lambda\right).
\end{align}
Choosing $\lambda = U\sqrt{\frac{720NK\log{2NKT}}{T}}$ bounds the distribution dependent regret incurred by \NAM\ algorithm by 
\begin{align}
    R \leq \mathcal{O}\left(U^2K\sqrt{NKT\log{2NT}}\right)
\end{align}
\end{theorem}
}
\begin{proof}[Proof Sketch]
We use the standard proof technique of bounding regret accumulated while eliminating arms to reject the set of a confidence bound-based algorithm to tune $\lambda$ and calculate the regret. The detailed proof is provided in Appendix \ref{sec:proof_main_theorem}.
\end{proof}

{
\begin{remark}We note that the regret bound of \NAM\ is looser compared to the lower bound in Theorem 2 by a factor of $K\log{(2NT)}$ for bandits with joint reward as the mean of rewards of individual arms in an action. This is because, the \NAM\ algorithm generates random actions using randomly sampled arms from the set $\mathcal{N}_e$ and the arms from accept set $\mathcal{A}_e$ at every epoch $e$. Now, to compare two arms, $i$ and $j$, they must belong to different actions otherwise we associate same reward to both the arms. The probability of the two arms assigned into a same action is $\frac{K_e-1}{|\mathcal{N}_e|-1}$. For a case where $|\mathcal{N}_e| = K_e + 1$, this probability is $\frac{K_e-1}{K_e}$, and we require to sample $K_e\le K$ more times to generate a meaningful comparison between the $K_e + 1$ arms. Because of this increased sampling requirement by a factor of $K$, our regret bound is looser by a factor of $K$.
\end{remark}
}

We note that there may be scenarios where an agent does not know the value of $U$ and cannot tune $\lambda$ accordingly. In such a case, the agent increases its regret because of not knowing the joint reward function. For a value of $\lambda = \sqrt{\frac{720NK\log{2NT}}{T}}$, which does not use $U$, the regret of the algorithm is bounded as 
\begin{align}
\mathcal{O}\left(\left(U^3+U\right)K\sqrt{NKT\log{2NT}}\right).    
\end{align}

Additionally, we note that we can convert \NAM\ to an anytime algorithm using the doubling trick of restarting the algorithm at $T_l = 2^l\ \forall\ l=1, 2, \cdots$ until the unknown time horizon $T$ is reached \citep{auer2010ucb}. Using analysis from \cite[Theorem 4]{besson2018doubling}, we show that \NAM\ for unknown $T$ achieves a regret bound of $\tilde{\mathcal{O}}(\sqrt{T})$. We present the complete proof  in Appendix \ref{sec:anytime_algorithm_proof}.

We also considered a case where the reward function is indeed linear, but a scaled function of individual rewards such that Equation \eqref{eq:cont_lower_defn_n} in the bi-Lipschitz Assumption \ref{continuous_assumption} modifies to
\begin{align}
    \big|\mu_{{\bf a}_1} -\ \mu_{{\bf a}_2} \big|
        = \ U {\big\|}\mathbb{E}[{\bm X}_{{\bf a}_1}] - \pi(\mathbb{E}[{\bm X}_{{\bf a}_2}]){\big\|_1} \label{eq:linear_function}
\end{align}
for some permutation $\pi$ of $\bm{X}$. Then, the same analysis holds and for $\lambda = \frac{1}{U}\sqrt{\frac{720NK\log2NT}{T}}$ the regret bound becomes independent of $U$ as:
\begin{align}
    R\leq \mathcal{O}\left(K\sqrt{NKT\log{2NT}}\right).
\end{align}
We noted that the regret becoming independent of $U$ is not surprising. The intuition behind this observation is, if $U$ becomes large then it is easy to separate arms and the regret does not grow large as the algorithm quickly finds the good arms. On the other hand, if $U$ becomes small, the cost of choosing a wrong arm is reduced by a factor of $U$.
{
\section{Experiments}\label{sec:experimental_results}
We experimentally evaluated the performance of the proposed algorithm. We discuss the reward setups, list baseline algorithms chosen for comparison, provide experimental details, and discuss the results.

\subsection{Reward Setups}
For our experiments, we focused on the following two joint reward functions. 
\begin{enumerate}
    \item A joint reward function that is a linear function of individual rewards, i.e., $r_{\bm{a}_t,t} = \bm{\theta}^T\bm{X}_{\bm{a}_t}$, where $\bm{\theta}\in\mathbb{R}^K$ is a vector with all entries as $1/K$. Such a linear reward function is used in slate selection for e-commerce \citep{dimakopoulou2019marginal, liu2021map}.
    \item A joint reward function is a quadratic function of individual rewards, i.e., $r_{\bm{a}_t,t} = \bm{X}_{\bm{a}_t}^T\bm{A}\bm{X}_{\bm{a}_t}$, where $\bm{A}\in\mathbb{R}^{K\times K}$ is an upper triangular matrix with all entries as $2/K(K+1)$. Such a quadratic reward function is used in cross-selling optimization to quantify the total profit from selling a bundle of items compared to the profit from selling the items in the bundle separately \citep{1250942}.
\end{enumerate}
For each joint reward function, each individual arm reward follows a Bernoulli distribution with mean sampled from $\mathbb{U}([0,1])$.

\subsection{Baseline Algorithms}
We chose the following baseline algorithms.
\begin{enumerate}
    \item UCB \citep{auer2010ucb} is a classic multi-armed bandit algorithm. It selects arms using the upper confidence bound of the reward estimates. Because of the combinatorial explosion of actions possible, we run it only for $K = 2$ for our setup.
    \item CSAR \citep{rejwan2020top} is the state-of-the-art algorithm for the case when the joint reward is the sum of rewards of independent arms. It involves an efficient sampling scheme that uses Hadamard matrices to accurately estimate the individual arms’ expected rewards.
    \item CMAB-SM \citep{agarwal2022stochastic} is the state-of-the-art algorithm for non-linear joint reward function with full-bandit feedback. It is a divide-and-conquer strategy that divides the original problem into several subproblems and efficiently combines the results of those individual subproblems. 
    \item LinTS (Scaled) \citep{agrawal2013thompson} is the state-of-the-art algorithm for linear joint reward function with full-bandit feedback. It uses the linearity of the reward function to estimate the rewards of individual arms. Further, it creates a confidence ellipsoid around the reward estimate and samples a new reward estimate for each round. The confidence ellipsoid scales with $O(N)$.
    \item LinTS (Unscaled) is the unscaled version of LinTS (Scaled). We remove the $O(N)$ scaling of the ellipsoid to shrink the confidence interval. This will reduce the exploration which Thompson sampling provides and can possibly lead to a faster convergence although this is not theoretically guaranteed.
\end{enumerate}

\subsection{Experimental Details}
For both reward setups, we used $N = 45$ base arms. The cardinality constraints were chosen as $K = 2,4,8$ for the easy construction of Hadamard matrices for the CSAR. We ran experiments on a time horizon of $10^6$. We ran each algorithm 25 times except LinTS (2 times) due to its excessively high runtime. We calculated the average regret and the maximum and minimum values of the cumulative regret of each algorithm.


\subsection{Results and Discussion}


\begin{figure*}[!t]
    \centering
    \begin{subfigure}[b]{0.32\textwidth}
        \centering
        \includegraphics[width=\textwidth]{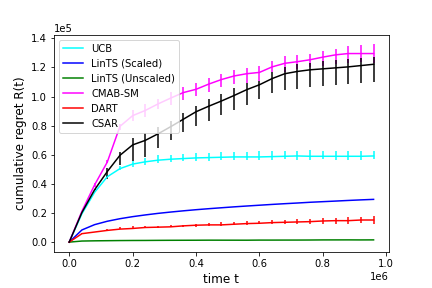}
        \caption{$K=2$.}
        \label{fig:K_2_mean_linTS}
    \end{subfigure}
    \begin{subfigure}[b]{0.32\textwidth}
        \centering
        \includegraphics[width=\textwidth]{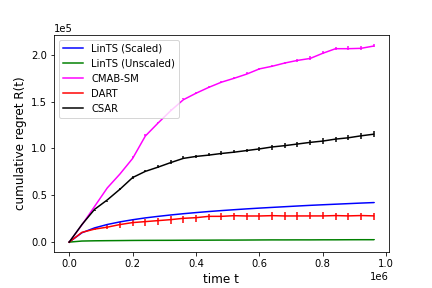}
        \caption{$K=4$.}
        \label{fig:K_4_mean_linTS}
    \end{subfigure}
    \begin{subfigure}[b]{0.32\textwidth}
        \centering
        \includegraphics[width=\textwidth]{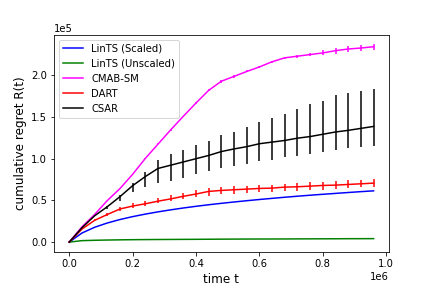}
        \caption{$K=8$.}
        \label{fig:K_8_mean_linTS}
    \end{subfigure}
    \caption{Regret plots for joint rewards as the mean of individual arm rewards.}
    \label{fig:mean_reward_plots}
\end{figure*}

\begin{figure*}[!t]
    \centering
    \begin{subfigure}[b]{0.32\textwidth}
        \centering
        \includegraphics[width=\textwidth]{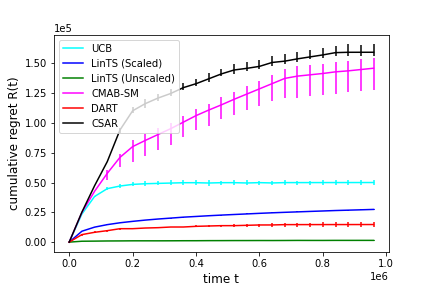}
        \caption{$K=2$.}
        \label{fig:K_2_quad}
    \end{subfigure}
    \begin{subfigure}[b]{0.32\textwidth}
        \centering
        \includegraphics[width=\textwidth]{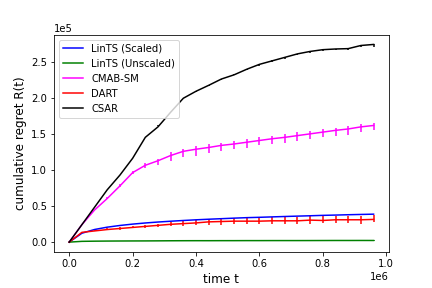}
        \caption{$K=4$.}
        \label{fig:K_4_quad}
    \end{subfigure}
    \begin{subfigure}[b]{0.32\textwidth}
        \centering
        \includegraphics[width=\textwidth]{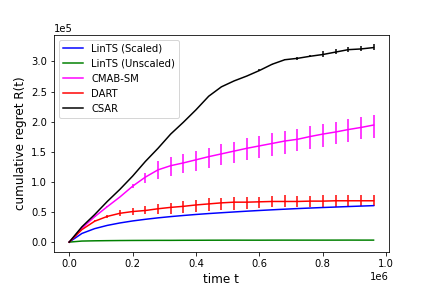}
        \caption{$K=8$.}
        \label{fig:K_8_quad}
    \end{subfigure}
    \caption{Regret plots for joint rewards as a quadratic function of individual arm rewards.} \label{fig:quad_reward_plots}
\end{figure*}

\Cref{fig:mean_reward_plots,fig:quad_reward_plots} depict the average (across runs) cumulative regret curves for \NAM\ (in red) and baseline algorithms for the linear and quadratic reward setups respectively. The maximum and minimum values of the cumulative regret of each algorithm are also represented using error bars in the plots.

From Figure \ref{fig:mean_reward_plots}, we note that, in terms of cumulative regret, for all $K=2,4,8$, \NAM\ performs significantly better than the baseline algorithms except for LinTS when the joint reward is the mean of the individual arm rewards. DART performs better than CSAR because after updating $\Delta$, CSAR generates fresh $\frac{K^2}{\Delta^2}$ samples instead of using previous samples to improve estimates. Hence, although CSAR is order optimal, we observed that the performance deteriorates in practice. DART performs better than UCB as the latter has a much larger action space compared to DART due to combinatorial explosion. Note that, due to the combinatorial explosion of the actions possible, we ran UCB only for $K=2$. 

From Figure \ref{fig:mean_reward_plots}, we next comment on the empirical performance of DART compared to the two versions of LinTS we considered. 
When compared to LinTS (Scaled), DART performs better for $K=2,4$. However, as $K$ increases LinTS (Scaled) performs better than \NAM. This is expected as the regret scaling of $O(\sqrt{N})$ for LinTS is better than the regret scaling of $O(K\sqrt{K})$ for DART. For the linear joint reward setup under consideration, we note that LinTS is better than CSAR as the latter discards samples after every round making it empirically inefficient. Furthermore, we observe that LinTS (Unscaled) performs better than DART. We attribute this better performance to the problem parameters. We note that the expected rewards for individual arms for the good arms are close to 1. This results in a low variance of the rewards and significant exploration with a scaling factor of N is not necessary. However, DART uses the confidence intervals constructed using the highest possible variance for the arm rewards, and it explores more even when it is not required. This worst-case variance-based algorithm vs. identifying the variance of problem setup using Bernstein inequality is a possible future extension for this work.

From Figure \ref{fig:quad_reward_plots}, we note that, in terms of cumulative regret, for all $K=2,4,8$, \NAM\ performs significantly better than the baseline algorithms except for LinTS when the joint reward is a quadratic function of the individual arm rewards. From Figure \ref{fig:quad_reward_plots}, we next comment on the empirical performance of DART compared to the two versions of LinTS we considered. When compared to LinTS (Scaled), DART performs better for $K=2,4$. However, as $K$ increases LinTS (Scaled) performs better than \NAM. Furthermore, we observe that LinTS (Unscaled) performs better than DART. The explanation for these observations is similar to the mean joint rewards setup.

For empirical results when the joint reward is the maximum of individual arm rewards, refer to Appendix \ref{sec:max_experiments}.



}
\color{black}{}
\section{Conclusion}
We considered the problem of combinatorial multi-armed bandits with non-linear rewards, where the agent chooses $K$ out of $N$ arms in each time-step and receives an aggregate reward. We obtained a lower bound of $\Omega(\sqrt{NKT})$ for the linear case with possibly correlated rewards. We proposed a novel algorithm, called \NAM, which is computationally efficient and has a space complexity which is linear in number of base arms. We analyzed the algorithm in terms of regret bound, and show that it is upper bounded by $\Tilde{\mathcal{O}}(K\sqrt{NKT})$. This regret bound  is only loose by a factor $O(K\sqrt{\log(NT)})$ compared to the lower bound of $\Omega(\sqrt{NKT})$ for bandit setup with correlated rewards . \NAM\ works efficiently for large $N$ and $K$ and outperforms existing methods empirically. 

\appendix
\newpage
{\color{black}
\section{Lower bound on  the top-$K$ subset selection problem} \label{app:lower_bound}
We start by noting that the reward function $f(\bm{X}) = 1/K\sum_{i=1}^K \bm{X}(i)$ satisfies  Assumptions $1-3$. Assumption $1$ holds from the fact that addition is commutative. Assumption $2$ holds from the fact that Expectation is linear. Lastly, the Corollary from the Assumption $3$ follows from the linearity of the sum. The next part of the proof follows the lines of the proof of Lemma 4 in \citep{cohen2017tight}, with relevant modifications based on the change in problem setup.

\begin{proof}[Proof of Theorem \ref{thm:lower_bound}]
Consider a setup with rewards $X_{i,t}' = 1/2 + \epsilon\bm{1}_{\{i\in{\bm{a}^*}\}} + Z_t$, where $\bm{a}^*$ is the best subset and $Z_t$ follows a Gaussian distribution with mean $0$ and variance $\sigma^2$. Let $\bm{a}^* = (i_1^*, i_2^*, \cdots, i_K^*)\in \mathcal{N}$  such that $i_j^* < i_k^* \forall 1\leq j< k\leq K$ be chosen uniformly randomly from $\mathcal{N}$. Let $T_1, \cdots, T_k$ be random variables such that $T_j$ is the number of times the agent plays $\bm{a}_t$ such that $i_j^* \in \bm{a}_t$. For each $\bm{a}\in \mathcal{N}$, let $\mathbb{P}_{\bm{a}}$ and $\mathbb{E}_{\bm{a}}$ be the probability distribution and the expectation with respect to the marginal distributions under which $\bm{a}^* = \bm{a}$. Then,
\begin{align}
    R &= \bm{E}\left[\max_{\bm{a}}\sum_{t=1}^T\frac{1}{K}\left(\sum_{i=1}^K (X_{\bm{a}(i)}' - X_{\bm{a}_t(i)}')\right)\right]\\
        &\geq \frac{1}{K}\bm{E}\left[\sum_{t=1}^T\left(\sum_{i=1}^K (X_{\bm{a}^*(i)}' - X_{\bm{a}_t(i)}')\right)\right]\\
      &= \frac{1}{K}\frac{1}{\binom{N}{K}}\sum_{\bm{a}\in\mathcal{N}}\mathbb{E}_{\bm{a}}\left[\sum_{t=1}^T\left(\sum_{i=1}^K (X_{\bm{a}(i)}' - X_{\bm{a}_t(i)}')\right)\right]\\
      &= \frac{1}{K}\frac{1}{\binom{N}{K}}\sum_{\bm{a}\in\mathcal{N}}\epsilon\mathbb{E}_{\bm{a}}\left[\sum_{j=1}^K(T-T_j)\right]\\
      &= \frac{1}{K}\epsilon\left(KT -\sum_{j=1}^K\frac{1}{\binom{N}{K}}\sum_{\bm{a}\in\mathcal{N}}\mathbb{E}_{\bm{a}}[T_j]\right)\\
      &= \epsilon\left(T -\frac{1}{\binom{N}{K}}\sum_{\bm{a}\in\mathcal{N}}\mathbb{E}_{\bm{a}}[T_j]\right)\label{eq:lower_bound_part_1}
\end{align}
where the Equation \eqref{eq:lower_bound_part_1} comes from the fact that the distributions of the optimal arms are identical and hence $\mathbb{E}_{\bm{a}}[T_j]$ is the same for all $j$. We now need to upper bound $\mathbb{E}_{\bm{a}}[T_j]$ for $j$.

For every $\bm{a}\in \mathcal{N}$ and $j\in[K]$, we introduce a new distribution, which is same as $\mathbb{P}_{\bm{a}}$ except that the reward of $i_j^*$ is also $1/2 +Z_t$. We refer to these laws by $\mathbb{P}_{\bm{a},-j}$ and $\mathbb{E}_{\bm{a},-j}$. Let $\lambda_t$ be the reward obtained at time $t$, and $\lambda^{(t)} = (\lambda_1, \cdots, \lambda_t)$ be the sequence of rewards obtained up to and including time $t$. Then, since $\lambda^{(T)}$ determines the actions of the learner over the entire game, and using Pinsker's inequality,
\begin{align}
    \mathbb{E}_{\bm{a}}[T_j] - \mathbb{E}_{\bm{a}, -j}[T_j] &\leq T\cdot D_{TV}\left(\mathbb{P}_{\bm{a}, -j}[\lambda^{(T)}] ,  \mathbb{P}_{\bm{a}}[\lambda^{(T)}]\right)\\
    &\leq T\sqrt{\frac{1}{2}D_{KL}\left(\mathbb{P}_{\bm{a}, -j}[\lambda^{(T)}] || \mathbb{E}_{\bm{a}}[\lambda^{(T)}]\right)}\label{eq:pinskers_bound}
\end{align}
{\color{black}
where $D_{TV}(p,q)$ is the total variation distance between distribution $p$ and distribution $q$ and $D_{KL}(p||q)$ is the Kullback-Liebler (KL) divergence between distribution $p$ and distribution $q$.
}

Now, from the chain rule of KL-divergence, $D_{KL}\left(\mathbb{P}_{\bm{a}, -j}[\lambda^{(T)}] || \mathbb{E}_{\bm{a}}[\lambda^{(T)}]\right)$ becomes
\begin{align}
    \sum_{t=1}^T\mathbb{E}_{\lambda^{(t-1)}\sim \mathbb{P}_{\bm{a}, -j}}\left[D_{KL}\left({\mathbb{P}_{\bm{a},-j}[\lambda_t|\lambda^{(t-1)}]}||{\mathbb{P}_{\bm{a}}[\lambda_t|\lambda^{(t-1)}]}\right)\right]\label{eq:chain_rule_kl_div}
\end{align}

Consider a single term in the sum in Equation \eqref{eq:chain_rule_kl_div}. If $i_j^*\notin\bm{a}_t$, then the reward obtained under $\bm{P}_{\bm{a}}$ and $\bm{P}_{\bm{a},-j}$ are the same, and the KL divergence is $0$. If $i_j^*\in\bm{a}_t$, then the rewards under $\bm{P}_{\bm{a}}$ and $\bm{P}_{\bm{a},-j}$ are both Gaussian with $\epsilon/K$ far means and variance of $\sigma^2$. Hence, we have
\begin{align}
    D_{KL}\left({\mathbb{P}_{\bm{a},-j}[\lambda_t|\lambda^{(t-1)}]}||{\mathbb{P}_{\bm{a}}[\lambda_t|\lambda^{(t-1)}]}\right) \leq \frac{1}{2}\left(\frac{\epsilon}{K}\right)^2\frac{1}{\sigma^2}.
\end{align}
Using the obtained KL-divergence in Equation \eqref{eq:chain_rule_kl_div} and subsequently in Equation \eqref{eq:pinskers_bound} we get,
\begin{align}
    D_{KL}\left(\mathbb{P}_{\bm{a}, -j}[\lambda^{(T)}] || \mathbb{E}_{\bm{a}}[\lambda^{(T)}]\right) &\leq \sum_{t=1}^T\mathbb{P}_{\bm{a},-j}[i_j^*\in\bm{a}_t]\frac{\epsilon^2}{2K^2\sigma^2} = \frac{\epsilon^2}{2K^2\sigma^2}\mathbb{E}_{\bm{a},-j}[T_j],\text{ and }\\
    \mathbb{E}_{\bm{a}}[T_j] &\leq \mathbb{E}_{\bm{a},-j}[T_j] + \frac{T\epsilon}{2K\sigma}\sqrt{\mathbb{E}_{\bm{a},-j}[T_j]}
\end{align}

We now want to upper bound $\mathbb{E}_{\bm{a}\sim\mathbb{U}(\mathcal{N})}[\mathbb{E}_{\bm{a},-j}[T_j]]$ to proceed from Equation \eqref{eq:lower_bound_part_1}. Note that at each time step, we play $K$ arms and hence collectively over $N$ arms, we play all arms exactly $KT$ times. Further, we fixed the distribution for all $j\in[N]$, we have: 
\begin{align}
    \mathbb{E}_{\bm{a}^*\sim\mathbb{U}(\mathcal{N})}[\mathbb{E}_{\bm{a}^*,-j}[T_j]] &= \frac{1}{N}\sum_{j=1}^N\mathbb{E}_{\bm{a}^*\sim\mathbb{U}(\mathcal{N})}[\mathbb{E}_{\bm{a}^*,-j}[T_j]]\\
    &= \frac{1}{N}\mathbb{E}_{\bm{a}^*\sim\mathbb{U}(\mathcal{N})}[\mathbb{E}_{\bm{a}^*,-j}[\sum_{j=1}^NT_j]]\\
    &= \frac{1}{N}\mathbb{E}_{\bm{a}^*\sim\mathbb{U}(\mathcal{N})}[\mathbb{E}_{\bm{a}^*,-j}[KT]]\\
    &= \frac{1}{N}KT\\
    &= \frac{KT}{N}
\end{align}

Now we have $K \le N/2$ which implies $TK/(N)\le T/2$ which gives:
\begin{align}
    \frac{1}{\binom{N}{K}}\sum_{\bm{a}\in\mathcal{N}}\mathbb{E}_{\bm{a}\in\mathcal{N}}[T_j]&\leq \frac{1}{\binom{N}{K}}\sum_{\bm{a}\in\mathcal{N}}\mathbb{E}_{\bm{a},-j}[T_j] + \frac{1}{\binom{N}{K}}\sum_{\bm{a}\in\mathcal{N}}\frac{\epsilon}{2K\sigma}\sqrt{\mathbb{E}_{\bm{a},-j}[T_j]}\label{eq:apply_Cauchy_Schwarz}\\
    &\leq\frac{1}{\binom{N}{K}}\sum_{\bm{a}\in\mathcal{N}}\mathbb{E}_{\bm{a},-j}[T_j] + \frac{\epsilon}{2K\sigma}\sqrt{\frac{1}{\binom{N}{K}}\sum_{\bm{a}\in\mathcal{N}}\mathbb{E}_{\bm{a},-j}[T_j]}\\
    &\leq \frac{TK}{N} + \frac{\epsilon}{2K\sigma}\sqrt{\frac{TK}{N}}\\
    &\leq \frac{T}{2} + \frac{\epsilon}{2K\sigma}\sqrt{\frac{TK}{N}}\\
    &\leq \frac{T}{2} + \frac{\epsilon}{2\sigma}\sqrt{\frac{T}{NK}}\\
\end{align}
where Equation \eqref{eq:apply_Cauchy_Schwarz} is obtained from the Cauchy Schwarz inequality.

Substituting this in Equation \eqref{eq:lower_bound_part_1}, we get
\begin{align}
    R\geq \epsilon T\left(\frac{1}{2}-\frac{\epsilon}{2\sigma}\sqrt{\frac{T}{NK}}\right)
\end{align}
for all values of $\epsilon$.  Choosing $\epsilon = \frac{\sigma}{2}\sqrt{\frac{NK}{T}}$, we have
\begin{align}
    R \geq \frac{\sigma }{8}\sqrt{NKT}
\end{align}

\end{proof}


}
\newpage
\section{Proof of Lemma \ref{lem:expected_Z_lemma}}\label{sec:expected_Z_lemma_proof}

\if 0
\begin{lemma}
Let $i, j\in[N]$ be two arms such that $\mathbb{E}[X_i]>\mathbb{E}[X_j]$. Let $Z_{i,j}(e)$ be a random variable denoting the difference between the reward obtained on playing action $\bm{a}_1$ containing arm $i$ and action $\bm{a}_2$ containing arm $j$. Then the expected value of $Z_{i,j}(e)$ is upper bounded by $U\Delta_{i,j}$, and lower bounded by $0$, or,
\begin{align}
    0\leq \mathbb{E}[Z_{i,j}(e)] \leq U\Delta_{i,j}
\end{align}
\end{lemma}
\fi 
\begin{proof}
We first show the upper bound. The cardinality of both $\mathcal{N}_e(i)$ and $\mathcal{N}_e(j)$ is $\binom{|\mathcal{N}_e|-1}{K_e-1}$ as we have fixed one of the $K_e$ places for arm $i$ and now we can fill only $K_e-1$ places from the available $|\mathcal{N}_e|-1$ arms.  { Algorithm \ref{alg:nsras} partitions a random, uniformly distributed  permutation over $\mathcal{N}_e$, so all actions $\bm{a}\in\mathcal{N}_e(i)$ are equally likely, and likewise for $\bm{a}\in\mathcal{N}_e(j)$. } Taking the expectation over the actions played and the reward obtained, we get the expected value of $Z_{i,j}(e)$ as
\begin{align}
    \mathbb{E}\left[Z_{i,j}(e)|i,j\in\mathcal{N}_e\right] &= \mathbb{E}\left[r_{\bm{a}_i}(e) - r_{\bm{a}_j}(e)|i,j\in\mathcal{N}_e\right]\\ &=\frac{1}{\binom{|\mathcal{N}_e|-1}{K_e-1}}\left(\sum_{\bm{a}\in\mathcal{N}_e(i)}\mu_{\bm{a}} - 
    \sum_{\bm{a}\in\mathcal{N}_e(j)}\mu_{\bm{a}}\right)\label{eq:exp_over_a_and_e}\\
    &\leq \frac{1}{\binom{|\mathcal{N}_e|-1}{K_e-1}}\binom{|\mathcal{N}_e|-2}{K_e-1}U\Delta_{i,j}\label{eq:Z_upper_bnd}\\
    %
    %
    &=\frac{|\mathcal{N}_e|-K_e}{|\mathcal{N}_e|-1}U\Delta_{i,j} \leq U\Delta_{i,j} \label{eq:Z_gap_upper}.
\end{align} 
Equation \eqref{eq:exp_over_a_and_e} is obtained by the linearity of expectation and taking the expectation over rewards of uniformly distributed actions $\bm{a}_i$ and $\bm{a}_j$. Equation \eqref{eq:Z_upper_bnd} is obtained by noting that there exist exactly $\binom{|\mathcal{N}_e|-2}{K_e-1}$ actions where arm $i$ is replaced by arm $j$. From Assumption \ref{continuous_assumption} of Lipschitz continuity, the difference between the expected reward of those actions is bounded by $U\Delta_{i,j}$. The remaining actions contain both arms $i$ and $j$, thus are in both $\mathcal{N}_e(i)$ and $\mathcal{N}_e(j)$, and so cancel out. Equation \eqref{eq:Z_gap_upper} comes from simplifying the fraction with binomial and noticing that $K_e\geq 1$. This proves the upper bound.

Similarly, we obtain the lower bound using Assumption \ref{inverse_continuity}
\begin{align}
    \mathbb{E}\left[Z_{i,j}(e)|i,j\in\mathcal{N}_e\right] &= \mathbb{E}\left[r_{\bm{a}_i}(e) - r_{\bm{a}_j}(e)|i,j\in\mathcal{N}_e\right]\\ &=\frac{1}{\binom{|\mathcal{N}_e|-1}{K_e-1}}\left(\sum_{\bm{a}\in\mathcal{N}_e(i)}\mu_{\bm{a}} - 
    \sum_{\bm{a}\in\mathcal{N}_e(j)}\mu_{\bm{a}}\right)\\
    &\geq \frac{\Delta_{i,j}}{U}\frac{1}{\binom{|\mathcal{N}_e|-1}{K_e-1}}\binom{|\mathcal{N}_e|-2}{K_e-1}\label{eq:Z_lower_bnd}\\
    %
    %
    &= \frac{\Delta_{i,j}}{U}\frac{|\mathcal{N}_e|-K_e}{|\mathcal{N}_e|-1} \geq \frac{\Delta_{i,j}}{UK_e } \geq \frac{\Delta_{i,j}}{UK} \label{eq:Z_gap_lower}.
\end{align} 
{\color{black}Equation \eqref{eq:Z_gap_lower} is obtained from Corollary \ref{inverse_continuity}.} The difference between the expected reward of the actions is lower bounded by $\frac{\Delta_{i,j}}{U}$. Equation \eqref{eq:Z_gap_lower} comes from noting that $K_e(|\mathcal{N}_e| - K_e)\geq |\mathcal{N}_e| - 1$. This proves the lower bound.
\end{proof}
\section{Proof of Lemma \ref{lem:num_epochs_lemma}}\label{sec:proof_num_epochs_lemma}

Before proving the result, we first state the Azuma-Hoeffding Lemma which we use to calculate the concentration inequalities.  

\begin{lemma}[Azuma-Hoeffding {\citep[Chapter 3]{bercu2015concentration}}]\label{lem:azuma_hoeffding}
If $\{W_n\}$ is a zero-mean martingale process with almost surely bounded increments $|W_n-W_{n-1}| \leq C$, then for any $\delta>0$ with probability at least $1-\delta$, $|W_n| \leq C\sqrt{2n \log(2/\delta)}$.
\end{lemma}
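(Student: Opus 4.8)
The plan is to prove the statement in two stages: first establish a two-sided exponential tail bound of the form $\mathbb{P}(|W_n| \geq t) \leq 2\exp\!\left(-t^2/(2nC^2)\right)$, and then invert this bound to recover the stated high-probability guarantee. The inversion is purely algebraic, so the real content lies in the tail bound, which I would obtain by the Chernoff (exponential-moment) method.

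For the tail bound, write $W_n = \sum_{k=1}^n D_k$ where $D_k = W_k - W_{k-1}$ are the martingale differences (taking $W_0 = 0$, justified since the process is zero-mean). The martingale property gives $\mathbb{E}[D_k \mid \mathcal{F}_{k-1}] = 0$, and the hypothesis gives $|D_k| \leq C$ almost surely, so conditionally on $\mathcal{F}_{k-1}$ the variable $D_k$ lies in an interval of width $2C$ with conditional mean zero. The key analytic input is Hoeffding's lemma, which bounds the conditional moment-generating function of such a variable: $\mathbb{E}[e^{sD_k}\mid\mathcal{F}_{k-1}] \leq \exp\!\left(s^2C^2/2\right)$ for every $s\in\mathbb{R}$.

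Given this, I would bound the moment-generating function of $W_n$ by conditioning on the most recent increment and peeling off one term at a time, using the tower property:
\[
\mathbb{E}[e^{sW_n}] = \mathbb{E}\!\left[e^{sW_{n-1}}\,\mathbb{E}[e^{sD_n}\mid\mathcal{F}_{n-1}]\right] \leq \exp\!\left(\tfrac{s^2C^2}{2}\right)\mathbb{E}[e^{sW_{n-1}}],
\]
and iterating yields $\mathbb{E}[e^{sW_n}] \leq \exp\!\left(ns^2C^2/2\right)$. Markov's inequality applied to $e^{sW_n}$ then gives $\mathbb{P}(W_n \geq t) \leq \exp\!\left(-st + ns^2C^2/2\right)$ for every $s>0$; optimizing the exponent over $s$ (the minimizer is $s = t/(nC^2)$) produces the one-sided bound $\mathbb{P}(W_n \geq t) \leq \exp\!\left(-t^2/(2nC^2)\right)$. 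Applying the identical argument to the martingale $-W_n$ (also zero-mean with increments bounded by $C$) and combining the two via a union bound gives the two-sided tail bound.

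Finally, to obtain the stated form, set the two-sided bound $2\exp\!\left(-t^2/(2nC^2)\right)$ equal to $\delta$ and solve for $t$, giving $t = C\sqrt{2n\log(2/\delta)}$; hence $\mathbb{P}\!\left(|W_n| > C\sqrt{2n\log(2/\delta)}\right) \leq \delta$, which is exactly the claim. The main obstacle is Hoeffding's lemma: the bound $\mathbb{E}[e^{sD}\mid\mathcal{F}] \leq \exp\!\left(s^2C^2/2\right)$ requires a convexity argument—bounding $e^{sx}$ on $[-C,C]$ by the chord through its endpoints, then minimizing the resulting expression over an auxiliary parameter—after which the martingale peeling, the Chernoff optimization, and the inversion are all routine. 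Since this is a standard concentration result (cited from \citep[Chapter 3]{bercu2015concentration}), I would alternatively simply invoke the statement of Hoeffding's lemma and present only the peeling and inversion steps.
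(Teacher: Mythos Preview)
Your proof is correct and follows the standard Chernoff/Hoeffding-lemma route. Note, however, that the paper does not actually prove this lemma: it is stated as a cited result from \citep[Chapter 3]{bercu2015concentration} and used as a black-box tool in the proof of Lemma~\ref{lem:num_epochs_lemma}, so there is no proof in the paper to compare against.
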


\if 0
\begin{lemma}
Let arm $i, j\in[N]$ be two arms such that $\mathbb{E}[X_i]>\mathbb{E}[X_j]$. Let $\Delta$ be such that $\Delta < \hat{\mu}_i -\hat{\mu}_j \leq2\Delta$. Then arm $i$ and arm $j$ are separable and $\frac{32\log{2/\delta}}{\Delta^2}\leq e \leq  \mathcal{O}\left(\frac{288}{U^2\Delta_{i,j}^2}\right)$ with probability at least $1-\delta$.
\end{lemma}
\fi 

{\color{black}
\begin{proof}[Proof of Lemma \ref{lem:num_epochs_lemma}]
Notice that the difference of estimates of arms $i$ and $j$, $\hat{\mu}_i-\hat{\mu}_{j}$, at epoch $e$ is $(\sum_{e'=1}^e Z_{i,j}(e'))/e$. Also, the number of times arm $i$ and arm $j$ are sampled is the same as the epoch counter $e$ as each arm is sampled only once in an epoch. Further, the \NAM\ algorithm collects $e = \frac{32\log(2NT)}{\Delta^2}$ samples of an arm before decaying $\Delta$ to $\Delta/2$. Hence, for an $\Delta$ such that $\Delta < \hat{\mu}_i - \hat{\mu}_j\le 2\Delta$, the \NAM\ algorithm must have collected at least $e = \frac{32\log (2NT)}{(2\Delta)^2}$ samples for arms $i$ and $j$. Further, the number of samples cannot exceed $\frac{32\log (2NT)}{\Delta^2}$.

Using the concentration bound in Lemma \ref{lem:azuma_hoeffding} on $|Y_{i,j}(e)|$ with $C=2$, then with probability at least $1-NT$, we get
\begin{align}
    \Big|\frac{Y_{i,j}(e)}{e}\Big| %
    &\leq  \frac{2}{e}\sqrt{2e\log{(2NT)}} \nonumber\\
    &=  \sqrt{\frac{8\log{(2NT)}}{e}} \nonumber\\
    &\leq \sqrt{\frac{8\log{(2NT)}(2\Delta)^2}{32\log{(2NT)}}}\label{eq:min_epochs} \\
    &\leq \Delta, \nonumber
\end{align}



{
Plugging in the value of $Y_{i,j}(e)$ from Equation \eqref{eq:martingale_def}, we have 
\begin{equation}
    \frac{1}{e}\Big|\sum_{e'=1}^eZ_{i,j}(e') - \sum_{e'=1}^e\mathbb{E}[Z_{i,j}(e')]\Big|\leq \Delta\label{eq:Z_conc}
\end{equation}
}



From the statement of Lemma \ref{lem:num_epochs_lemma}, we have $\hat{\mu}_i-\hat{\mu}_j < 2\Delta$. Combining with the definition of $Y_{i,j}(e)$ in Equation \eqref{eq:martingale_def}, we have
\begin{equation}
    \hat{\mu}_i-\hat{\mu}_{j} < 2\Delta
\end{equation}

Subtracting $\frac{1}{e}\sum_{e'=1}^e\mathbb{E}[Z_{i,j}(e')]$ from both sides, we have
\begin{equation}
    \hat{\mu}_i-\hat{\mu}_{j} - \frac{1}{e}\sum_{e'=1}^e\mathbb{E}[Z_{i,j}(e')] < 2\Delta-\frac{1}{e}\sum_{e'=1}^e\mathbb{E}[Z_{i,j}(e')]
\end{equation}

Again using $\hat{\mu}_i-\hat{\mu}_j = (\sum_{e'=1}^e Z_{i,j}(e'))/e$,  
\begin{equation}
    \frac{1}{e}\sum_{e'=1}^e Z_{i,j}(e') - \frac{1}{e}\sum_{e'=1}^e\mathbb{E}[Z_{i,j}(e')] < 2\Delta-\frac{1}{e}\sum_{e'=1}^e\mathbb{E}[Z_{i,j}(e')]
\end{equation}

By definition of $Y_{i,j}(e)$, we obtain
\begin{align}
    \frac{Y_{i,j}(e)}{e} &< 2\Delta-\frac{1}{e}\sum_{e'=1}^e\mathbb{E}[Z_{i,j}(e')] \\
    &< 2\Delta-\frac{\Delta_{i,j}}{UK}\label{eq:2Delta_gap}
\end{align}

Combining Equation \eqref{eq:2Delta_gap} and the negative part of left hand side of Equation \eqref{eq:Z_conc}, we have
\begin{align}
    -\Delta \leq \frac{Y_{i,j}(e)}{e} &< 2\Delta-\frac{\Delta_{i,j}}{UK}\\
    \Rightarrow  \frac{\Delta_{i,j}}{UK} &< 3\Delta\\
    \Rightarrow  \Delta &> \frac{\Delta_{i,j}}{3UK}
\end{align}

Combining Lemma \ref{lem:azuma_hoeffding} and the minimum value of $\Delta$ we obtain the required upper bound on the number of epochs to separate arm $i$ and arm $j$.
\end{proof}
}
\section{Proof of Lemma \ref{lem:action_gap_permutation}}\label{sec:proof_action_gap_permute}
\if 0
\begin{lemma}
Let $\bm{a}=(a_1, a_2, \cdots, a_K)$ be an action, then the expected regret suffered from playing $\bm{a}$ instead of $\bm{a}^* = (1, 2, \cdots, K)$ is bounded as
\begin{align}
    \big|\ \mathbb{E}\left[r_{\bm{a}}\right] - \mathbb{E}\left[r_{\bm{a}^*}\right]\big| 
        \leq \ U \sum_{i=1}^K\big|\mathbb{E}[X_{a_i}] - \mathbb{E}[X_{\Pi(i)}]\big|.
\end{align}
Where $\Pi$ is an arbitrary permutation of $\bm{a}^*$ that matches the arm indices if possible.
\end{lemma}
\fi 
\begin{proof}
Let $\bm{X}_{\bm{a}} = (X_{a_1}, \cdots, X_{a_K})$ be the random vector of the rewards of the arms in action $\bm{a}$. Also, let $\bm{X}_{\bm{a}^*} = (X_{1}, \cdots, X_{K})$ be the random vector of the rewards of the optimal arms. Then from Assumption \ref{continuous_assumption}, we have

\begin{align}
        \big|\ \mu_{{\bf a}} -\ \mu_{{\bf a}^*} \big| \ 
        &\leq \ U \min_{\pi \in \Pi}{\big|\big|}\mathbb{E}[{\bm X}_{{\bf a}}] - \pi(\mathbb{E}[{\bm X}_{{\bf a}^*}]){\big|\big|_1}\label{eq:appd:lem4:L2L1}\\
        &= U \min_{\pi \in \Pi} \sum_{i=1}^K\bigg|\mathbb{E}[X_{a_i}] - \mathbb{E}[X_{\pi(i)}]\bigg| \label{eq:appd:lem4:evalL1}\\
        &\leq \ U \sum_{i=1}^K\big|\mathbb{E}[X_{a_i}] - \mathbb{E}[X_{\pi'(i)}]\big|,\label{eq:appd:lem4:bnd}
\end{align} %
where \eqref{eq:appd:lem4:L2L1} uses the property that the $\ell_2$ norm is upper bounded by the $\ell_1$ norm, \eqref{eq:appd:lem4:evalL1} evaluates the $\ell_1$ norm and uses the property that $\bm{a}^*$ contains arms $1$ through $K$, and \eqref{eq:appd:lem4:bnd} holds for any permutation $\pi'$ of $\{1, \cdots, K\}$ which matches arms in $\bm{a}^*$ with corresponding $\bm{a}$, so  $\pi'(i) = a_i$ for any arm  $a_i\leq K$.

\if 0
\begin{align}
        \big|\ \mu_{{\bf a}_1} -\ \mu_{{\bf a}_2} \big| \ 
        &\leq \ U \min_{\pi \in \Pi}{\big|\big|}\mathbb{E}[{\bm d}_{{\bf a}_1}] - \pi(\mathbb{E}[{\bm d}_{{\bf a}_2}]){\big|\big|_2}\\
        &\leq \ U \min_{\pi \in \Pi}{\big|\big|}\mathbb{E}[{\bm d}_{{\bf a}_1}] - \pi(\mathbb{E}[{\bm d}_{{\bf a}_2}]){\big|\big|_1}\\
        &\leq\ U {\big|\big|}\mathbb{E}[{\bm d}_1] - \Pi(\mathbb{E}[{\bm d}_2]){\big|\big|_1}\forall \Pi\label{eq:action_gap_all_permute}
\end{align}
Equation \eqref{eq:action_gap_all_permute} suggests that we can choose any permutation and the inequality will always hold. So we choose a permutation that matches the arms in action $\bm{a}$ with arms in $\bm{a}^*$ and selects an arbitrary permutation on the remaining unmatched arms.
\fi
\end{proof}

\newpage
\section{Proof of Lemma \ref{lem:arm_regret_bound}} \label{sec:proof_arm_regret_bound}
\if 0
\begin{lemma}
For any sub-optimal arm $i\in{K+1,\cdots, N}$, the regret it can accumulate is bounded by
\begin{align}
    \frac{1440\log{(2/\delta)}}{U\Delta_{K,i}}
\end{align}
\end{lemma}
\fi 
\begin{proof}
We begin by counting the cumulative pseudo-regret incurred from actions involving sub-optimal arms.  Let $i\geq K+1$ denote a sub-optimal arm. Similar to \eqref{eq:regret}, let $R_i$ denote the cumulative pseudo-regret incurred from all of the actions $\bm{a}_t$ with sub-optimal arm $i\in \bm{a}_t,$
\begin{equation}
    R_i  = \mathbb{E}_{\bm{a}_1, \cdots, \bm{a}_T, }\left[\sum_{t=1}^T \left(\mu_{\bm{a}^*} - \mu_{\bm{a}_t }\right)\mathbbm{1}_{i\in \bm{a}_t}\right] \label{eq:Ri}.
\end{equation}

Similarly, let $R_{i,j}$ denote the cumulative pseudo-regret incurred from all of the actions $\bm{a}_t$ with sub-optimal arm $i\in \bm{a}_t,$ with respect to the  action where an optimal arm $j\not \in \bm{a}_t$ is swapped with arm $i$,  
\begin{equation}
    R_{i,j}  = \mathbb{E}_{\bm{a}_1, \cdots, \bm{a}_T, }\left[\sum_{t=1}^T  \left(  \mu_{\{j\} \cup\bm{a}_t \setminus \{i\}  } - \mu_{\bm{a}_t }\right) \mathbbm{1}_{i\in \bm{a}_t}\mathbbm{1}_{j \not \in \bm{a}_t}\right] \label{eq:Rij}.
\end{equation}
%
%


Similar to \cite{rejwan2020top}, we group the top-$K$ arms into two groups  %
based on how easy it is to separate them from the $(K+1)$th arm, $R^< = \{j\mid 1\leq j \leq K, \ \Delta_{j, K+1}\leq\Delta_{K,i}\}$ and $R^> = \{j\mid 1\leq j \leq K, \ \Delta_{j, K+1}>\Delta_{K,i}\}$. This gives the bound on the regret from arm $i, R_i$ as
\begin{align}
    R_i &\leq \sum_{j\in R^<} R_{i,j}+\sum_{j\in R^>} R_{i,j}
\end{align}

We now calculate the regret for both groups $R^<$ and $R^>$ separately in the following cases:
\begin{enumerate}
    \item \textbf{Regret by replacing arm} $j\in R^<$: Note that $\Delta_{j,i}<\Delta_{j,K+1}+\Delta_{K,i}$ as the gap for $\Delta_{K,K+1}$ is counted twice. Hence we get the upper bound on regret as, %
    \begin{align}
        \sum_{j\in R^<} R_{i,j} &\leq \frac{288U^2K^2\log{(2/\delta)}}{\Delta_{K,i}^2}\max_{j\in R^<}U\Delta_{j,i}\\
                &\leq \frac{288U^2K^2\log{(2/\delta)}}{\Delta_{K,i}^2}\max_{j\in R^<}U(\Delta_{j,K+1}+\Delta_{K,i})\\
                &\leq \frac{288U^2K^2\log{(2/\delta)}}{\Delta_{K,i}^2}U(2\Delta_{K,i}) = \frac{576U^3K^2\log{(2/\delta)}}{\Delta_{K,i}}
    \end{align}
    \item \textbf{Regret by replacing arm} $j\in R^>$: Note that since $\Delta_{j,K+1} > \Delta_{K, i}$, arm $j$ will move to the accept set before arm $i$ is rejected with probability at least $1-2\delta$. Once the algorithm moves arm $j$ to the accept set $\mathcal{A}_e$, it will not suffer any regret from replacing arm $j$. Let, $l = \arg\min_{j\in R^>\Delta_j}$, then we can bound the regret from arms in $R^>$ using following inequalities.
    \begin{align}
        \sum_{j\in R^>} R_{i,j} &\leq \frac{288U^2K^2\log{(2/\delta)}U\Delta_{1,i}}{\Delta_{1,K+1}^2}
        + \sum_{j=2}^l 288U^2K^2\log{(2/\delta)}\left(\frac{U\Delta_{j,i}}{\Delta_{j-1,K+1}^2}-\frac{U\Delta_{j,K+1}}{\Delta_{j-1,K+1}^2}\right)\\
        &\leq 288U^3K^2\log{(2/\delta)}\Bigg( \left(\sum_{j=1}^{l-1} \frac{\Delta_{j,i}-\Delta_{j+1,i}}{\Delta_{j,K+1}^2}\right) + \left(\frac{\Delta_{l,i}}{\Delta_{l,K+1}^2}\right)\Bigg)\\
        &\leq 288U^3K^2\log{(2/\delta)}\Bigg( \left(\sum_{j=1}^{l-1} \frac{\Delta_{j,K+1}-\Delta_{j+1,K+1}}{\Delta_{j,K+1}^2}\right) + \left(\frac{\Delta_{l,K+1}+\Delta_{K,i}}{\Delta_{l,K+1}^2}\right)\Bigg)\\
        &\leq 288U^3K^2\log{(2/\delta)}\Bigg( \left(\int_{\Delta_{l,K+1}}^{\Delta_{1,K+1}}\frac{1}{x^2}dx\right) + \left(\frac{2\Delta_{l,K+1}}{\Delta_{l,K+1}^2}\right)\Bigg)\\
        &= 288U^3K^2\log{(2/\delta)}\Bigg(\frac{1}{\Delta_{l,K+1}} -\frac{1}{\Delta_{1,K+1}}+\frac{2}{\Delta_{l,K+1}}\Bigg)\\
        &\leq 288U^3K^2\log{(2/\delta)}\left(\frac{3}{\Delta_{l,K+1}}\right) \leq \frac{864K^2U^3\log{(2/\delta)}}{\Delta_{K,i}}
    \end{align}
\end{enumerate}
Summing up the regrets for $R^<$ and $R^>$, we get total regret for sampling arm $i$ to be bounded as
\begin{align}
    R_i = \frac{1440U^3K^2\log{(2/\delta)}}{\Delta_{K,i}}
\end{align}
\end{proof}
\newpage
\section{Proof of Theorem \ref{thm:main_theorem}}\label{sec:proof_main_theorem}
\if 0
\begin{theorem}
Distribution independent regret incurred by \NAM\ algorithm is bounded by 
\begin{align}
    R \leq \mathcal{O}\left(\left(U+\frac{1}{U}\right)\sqrt{NKT\log{2NT^2}}\right)
\end{align}
for $\lambda = \sqrt{\frac{1440N\log{2NT}}{KT}}$
\end{theorem}
\fi
\begin{proof}
We note that there are three sources of regret for the \NAM. 
\begin{enumerate}
    \item The first is that regret will accumulate while eliminating sub-optimal arms. From Lemma \ref{lem:arm_regret_bound}, the regret accumulated while eliminating sub-optimal arms is bounded as
    \begin{align}
        \sum_{i=K+1}^N\frac{1440K^2U^3\log{2/\delta}}{\Delta_{K,i}}
    \end{align}
    \item The second is when the algorithm is not able to move optimal arms from $\mathcal{N}_e$ to the accept set $\mathcal{A}_e$ or move sub-optimal arms to the reject set $\mathcal{R}_e$ because of separability. That is, $\Delta_{i, K+1}< \lambda$ for some optimal arm $i: 1\leq i\leq K$, or $\Delta_{K, i}$ for some sub-optimal arm $i: K+1\leq i\leq N$. 
    
    \if 0
    To bound the regret, we will apply Lemma \ref{lem:action_gap_permutation} with a sub-optimal from a ``worst-case'' scenario where the top $K$ arms are not separable from $K+1$ arm, so $\Delta_{i, K+1} < \lambda$ for $i: 1\leq i \leq K$, and the $K+1$ arm through $N$ arm, so $\Delta_{K, i}\leq \lambda$ for $i: K+1< i< N$. Consider action $\bm{a} =\{K+1, K+2, 2K\}$ formed by using only sub-optimal arms. Note that we can make the regret largest with $\Delta_{i,K+1} = \Delta_{K,j} = \lambda$ for optimal arms $i: 1\leq i\leq K$ and sub-optimal arms $j: K+1\leq j\leq N$.  By construction, we have $\Delta_{1,K+1} \le \lambda$, e.g. those two arms cannot have means that are too far apart. Consequently, using Lemma \ref{lem:action_gap_permutation} for this action ${\bm a}$,  the expected (instantaneous) regret can be bounded as $K(U\lambda)$.  Thus, we can bound the overall cumulative regret the algorithm will suffer from this scenario as 
    \begin{align}
        TUK\lambda
    \end{align}
    \fi
    To bound the regret, we will apply Lemma \ref{lem:action_gap_permutation} with a sub-optimal action from a ``worst-case'' scenario where the top $K$ arms are not separable from the $(K+1)$st, so $\Delta_{i,K+1}<\lambda$ for $i:\ 1\leq i \leq K$, and the $(K+1)$st through $(2K+1)$st arms are not separable from the $K$th, so $\Delta_{K,i}<\lambda$ for $i:\ K+2\leq i \leq 2K+1$.  In this scenario, the accept set remains empty.  Consider the action ${\bm a} = (K+2, \dots, 2K+1)$ formed by using only sub-optimal arms.  Note we can make the regret largest with $\Delta_{K,K+1}\approx 0$.  By construction, we have $\Delta_{1,2K+1} < 2\lambda$, e.g. those two arms cannot have means that are too far apart. Consequently, using Lemma \ref{lem:action_gap_permutation} for this action ${\bm a}$,  the expected (instantaneous) regret can be bounded as $UK(2\lambda)$.  Thus, we can bound the overall cumulative regret the algorithm will suffer from this issue as
    \begin{align}
        2TUK\lambda
    \end{align}
    \item The third source of regret is due to either if a sub-optimal arm $i: K+1\leq i\leq N$ is not moved to ``reject'' set despite $\Delta_{K, i} \geq \lambda$ or if an optimal arm $i: 1\leq i \leq K$ is not moved to ``accept'' set despite $\Delta_{i,K+1}>\lambda$, in the number of rounds calculated in Lemma \ref{lem:num_epochs_lemma} . Using the union bound and Lemma \ref{lem:azuma_hoeffding}, the probability of this event can be bounded using $N\times 1/(NT)$ for each arm moved to the corresponding incorrect set. 
    With the loose upper bound of $UK$ (since the difference in means of any pair of arms is at most $1$), the expected cumulative regret from this situation can be bounded as
    \begin{align}
        TUK\times N\frac{1}{NT} = UK
    \end{align}
\end{enumerate}
Thus, the total regret  of \NAM\ algorithm can be bounded as
\begin{align}
    R &\leq \sum_{i:\Delta_{K,i}\ge \lambda}\frac{1440U^3K^2\log{2NT}}{\Delta_{K,i}} + 2TUK\lambda + UK\\
    &\leq \frac{1440NU^3K^2\log{2NT}}{\lambda} + 2TUK\lambda + UK\label{eq:lambda_replace}.
\end{align}
Equation \ref{eq:lambda_replace} is obtained from the fact that the algorithm stops sampling arms if the gap is small and cannot be resolved.

Choosing $\lambda = U\sqrt{\frac{720NK\log{2NT}}{T}}$, we get the required regret bound.
\end{proof}
\section{Conversion to anytime algorithm}\label{sec:anytime_algorithm_proof}
Our proposed algorithm \NAM\ requires the time horizon $T$ as an input.  However, \NAM\ can be modified to not require knowledge of $T$. We use the standard doubling trick from Multi-Armed Bandit literature \cite{auer2010ucb,besson2018doubling}. To use the doubling trick, we start the algorithm from $T_0 = 0$. We then restart the algorithm after every $T_l = 2^l,\ l=1,2, \cdots$ time steps, till the algorithm reaches the unknown $T$. Each restart of the algorithm runs for $T_l - T_{l-1}$ steps with $T_0 = 0$ with $\lambda_l = \sqrt{\frac{1440 N\log {2N(T_l - T_{l-1})}}{K(T_l - T_{l-1})}}$

To show that the regret is bounded by $T^{1/2}$ for the doubling algorithm, we use Theorem 4 from \cite{besson2018doubling} which we state in the following lemma.

\begin{lemma}\label{doubling_trick_lemma} {\cite[Theorem 4]{besson2018doubling}}
If an algorithm $\mathcal{A}$ satisfies $R_T(\mathcal{A}_T) \leq cT^\gamma(\log T)^\delta + f(T)$, for $0< \gamma < 1$, $\delta\geq 0$ and for $c> 0$, and an increasing function $f(t) = o(t^\gamma(\log t)^\delta(\text{at } t\to\infty)$, then anytime version $\mathcal{A}' := \mathcal{DT}(\mathcal{A}, (T_i)_{i\in\mathbb{N}})$ with geometric sequence $(T_i)_{i\in\mathbb{N}}$ of parameters $T_0\in\mathbb{N}^*$, $b>1, (i.e.,T_i = \lfloor T_0b^i\rfloor)$ with the condition $T_0(b-1) > 1$ if $\delta > 0$ satisfies,
\begin{align}
    R_T(\mathcal{A}') \leq l(\gamma, \delta, T_0, b)cT^\gamma(\log T)^\delta + g(T),
\end{align}
with a increasing function $g(t) = o(t^\gamma(\log t)^\delta)$ and a constant loss $l(\gamma, \delta, T_0, b)>1$,
\begin{align}
    l(\gamma, \delta, T_0, b) := \left(\left(\frac{\log (T_0(b-1)+1)}{\log (T_0(b-1))}\right)^\delta\right)\times\frac{b^\gamma(b-1)^\gamma}{b^\gamma-1}
\end{align}
\end{lemma}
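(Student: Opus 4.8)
The plan is to prove this via the standard block-decomposition argument for the doubling trick. Since the anytime wrapper $\mathcal{A}' = \mathcal{DT}(\mathcal{A}, (T_i)_{i\in\mathbb{N}})$ simply runs a fresh, independent copy of $\mathcal{A}$ on each block $[T_{i-1}+1, T_i]$ of length $n_i := T_i - T_{i-1}$ (with $T_{-1}:=0$, so the first block has length $n_0 = T_0$), and because the per-round regret is measured against a fixed optimal action in a stationary environment, the cumulative regret over $[1,T]$ is exactly the sum of the regrets incurred inside each block. Hence, writing $L = L(T)$ for the index of the block containing step $T$,
\[
R_T(\mathcal{A}') \;\le\; \sum_{i=0}^{L} R_{n_i}(\mathcal{A}_{n_i}) \;\le\; \sum_{i=0}^{L}\Bigl( c\, n_i^{\gamma}(\log n_i)^{\delta} + f(n_i) \Bigr),
\]
where the last step uses the assumed bound on $\mathcal{A}$. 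This reduces the claim to bounding a geometric-type sum.

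Next I would control the block lengths and the number of blocks. From $T_i = \lfloor T_0 b^i\rfloor$ one gets $n_i = \lfloor T_0 b^i\rfloor - \lfloor T_0 b^{i-1}\rfloor \le T_0 b^{i-1}(b-1) + 1 \le \bigl(T_0(b-1)+1\bigr) b^{i-1}$ for $i\ge 1$, while $T_{L-1} < T$ forces $b^{L}$ to be comparable to $T/T_0$ up to a factor $b$. Plugging the upper bound on $n_i^{\gamma}$ into the sum and evaluating the geometric series $\sum_{i\le L} b^{(i-1)\gamma} \le \frac{b^{L\gamma}}{b^{\gamma}-1}$, the factors $\bigl(T_0(b-1)+1\bigr)^{\gamma}$, the shift $b^{-\gamma}$, and the $T_0^{-\gamma}$ coming from $b^{L\gamma}\asymp (bT/T_0)^{\gamma}$ recombine to produce precisely the multiplicative loss $\frac{b^{\gamma}(b-1)^{\gamma}}{b^{\gamma}-1}$ in front of $c\,T^{\gamma}$, modulo terms that are $o(T^{\gamma})$ (absorbing the $1/T_0$ slack into lower order).

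For the logarithmic part, one can bound $(\log n_i)^{\delta} \le (\log T)^{\delta}$ crudely; to recover the sharp constant I would instead write $\log n_i \le \log\bigl((T_0(b-1)+1) b^{i-1}\bigr)$ and compare it against the "ideal" value $\log\bigl(T_0(b-1) b^{i-1}\bigr)$, whose ratio is maximized at $i=1$ and equals $\frac{\log(T_0(b-1)+1)}{\log(T_0(b-1))}$; raising to the power $\delta$ gives the first factor of $l(\gamma,\delta,T_0,b)$ (this is where the hypothesis $T_0(b-1)>1$ when $\delta>0$ is needed, so the denominator is positive). Finally, the leftover $\sum_{i\le L} f(n_i)$ together with all the accumulated floor-function remainders is collected into $g(T)$; using that $f$ is increasing with $f(t) = o(t^{\gamma}(\log t)^{\delta})$ and that $L = O(\log T)$, one verifies that $g$ is increasing and $g(t) = o(t^{\gamma}(\log t)^{\delta})$, which closes the argument. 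I expect the main obstacle to be exactly this constant-tracking bookkeeping — isolating the logarithmic correction cleanly and confirming that every floor slack and every $f$-term can be pushed into a genuinely lower-order, increasing remainder; the structural observation that restarts make regret additive over blocks is the easy part.
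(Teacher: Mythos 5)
The paper does not prove this statement at all: Lemma~\ref{doubling_trick_lemma} is imported verbatim as \cite[Theorem 4]{besson2018doubling} and used as a black box, so there is no in-paper proof to compare against. Your sketch is essentially a faithful reconstruction of the argument in that cited reference: decompose the regret additively over the restart blocks (valid because the environment is stationary and each block runs a fresh copy of $\mathcal{A}$), bound $n_i \le (T_0(b-1)+1)b^{i-1}$, sum the geometric series to produce the factor $b^\gamma(b-1)^\gamma/(b^\gamma-1)$, and extract the logarithmic correction from the ratio $\log(T_0(b-1)+1)/\log(T_0(b-1))$, which is indeed where the hypothesis $T_0(b-1)>1$ for $\delta>0$ enters. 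The one step I would tighten is your treatment of $\sum_{i\le L} f(n_i)$: bounding it by $(L+1)f(T)$ with $L=O(\log T)$ only gives $o\bigl(T^\gamma(\log T)^{\delta+1}\bigr)$, which is not good enough. You need the geometric growth of the $n_i$ more directly --- write $f(n_i)\le \epsilon_i\, n_i^\gamma(\log n_i)^\delta$ with $\epsilon_i\to 0$, and observe that the weighted sum is dominated by its last few terms, so the whole remainder is genuinely $o\bigl(T^\gamma(\log T)^\delta\bigr)$ and can be absorbed into $g(T)$. With that repair the argument is complete and matches the standard proof.
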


Using Lemma \ref{doubling_trick_lemma} for $b = 2, \gamma = 1/2, \delta = 1/2$, we can convert our algorithm to an anytime algorithm.
\newpage
\section{Additional Experiments} \label{sec:clear_experiments}

\subsection{Joint reward as $\max$ of arm rewards}\label{sec:max_experiments}
\begin{figure}[t]
    \centering
    \begin{subfigure}[b]{0.4\textwidth}
        \centering
        \includegraphics[width=\textwidth]{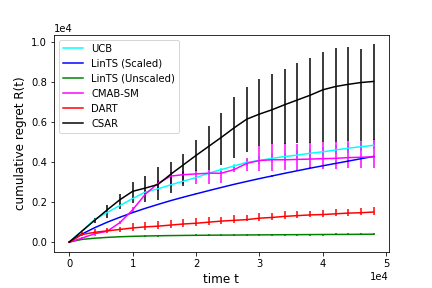}
        \caption{$K=2$}
        \label{fig:K_2_max}
    \end{subfigure}
    \begin{subfigure}[b]{0.4\textwidth}
        \centering
        \includegraphics[width=\textwidth]{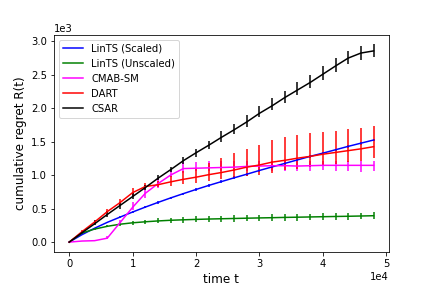}
        \caption{$K=4$}
        \label{fig:K_4_max}
    \end{subfigure}
    \caption{Regret plots for joint rewards as max of individual arm rewards}
    \label{fig:max_reward_plots_appndx}
\end{figure}

This experimental setup was the same as Section \ref{sec:experimental_results}, except the joint reward function is of the form $r(t) = \max_{i\in\bm{a}_t}{X_{i,t}}$.  The cumulative regrets of \NAM\ and other algorithms, averaged over 20 runs,  are shown in Figure \ref{fig:max_reward_plots_appndx}, we note that the performance of \NAM\ is significantly better than all other algorithms (except LinTS (Unscaled)) considered for $K=2$. The reason why We note that the CSAR algorithm performs the worst. We suspect this is because the CSAR algorithm is not able to approximate the $\max$ function with a linear model. For $K=4$, we note that the CMAB-SM algorithm (except LinTS (Unscaled)) performs the best. We suspect that is because CMAB-SM is able to eliminate arms faster for small $U$ with a divide-and-conquer approach. The reason why LinTS (Scaled) works the best is the same as explained in Section \ref{sec:experimental_results}.

\bibliography{refs}

@article{ auer02using,
  author = "Peter Auer",
  title = "Using Confidence Bounds for Exploitation-Exploration Trade-offs",
  journal = "Journal of Machine Learning Research",
  volume = "3",
  pages = "397-422",
  year = "2002"
}

@inproceedings{ dani08stochastic,
  author = "Varsha Dani and Thomas Hayes and Sham Kakade",
  title = "Stochastic Linear Optimization under Bandit Feedback",
  booktitle = "Proceedings of the 21st Annual Conference on Learning Theory",
  pages = "355-366",
  year = "2008"
}

@inproceedings{agarwal2021dart,
	title={Dart: Adaptive accept reject algorithm for non-linear combinatorial bandits},
	author={Agarwal, Mridul and Aggarwal, Vaneet and Umrawal, Abhishek Kumar and Quinn, Chris},
	booktitle={Proceedings of the AAAI Conference on Artificial Intelligence},
	volume={35},
	number={8},
	pages={6557--6565},
	year={2021}
}

@inproceedings{ abbasi-yadkori11improved,
  author = "Yasin Abbasi-Yadkori and David Pal and Csaba Szepesvari",
  title = "Improved Algorithms for Linear Stochastic Bandits",
  booktitle = "Advances in Neural Information Processing Systems 24",
  pages = "2312-2320",
  year = "2011"
}

@inproceedings{ filippi10parametric,
  author = "Sarah Filippi and Olivier Cappe and Aurelien Garivier and Csaba Szepesvari",
  title = "Parametric Bandits: The Generalized Linear Case",
  booktitle = "Advances in Neural Information Processing Systems 23",
  pages = "586-594",
  year = "2010"
}

@inproceedings{ jun17scalable,
  author = "Kwang-Sung Jun and Aniruddha Bhargava and Robert Nowak and Rebecca Willett",
  title = "Scalable Generalized Linear Bandits: Online Computation and Hashing",
  booktitle = "Advances in Neural Information Processing Systems 30",
  pages = "98-108",
  year = "2017"
}

@inproceedings{ li17provably,
  author = "Lihong Li and Yu Lu and Dengyong Zhou",
  title = "Provably Optimal Algorithms for Generalized Linear Contextual Bandits",
  booktitle = "Proceedings of the 34th International Conference on Machine Learning",
  pages = "2071-2080",
  year = "2017"
}

@inproceedings{rejwan2020top,
  title={Top-$ k $ Combinatorial Bandits with Full-Bandit Feedback},
  author={Rejwan, Idan and Mansour, Yishay},
  booktitle={Algorithmic Learning Theory},
  pages={752--776},
  year={2020}
}

@inproceedings{ gopalan14thompson,
  author = "Aditya Gopalan and Shie Mannor and Yishay Mansour",
  title = "Thompson Sampling for Complex Online Problems",
  booktitle = "Proceedings of the 31st International Conference on Machine Learning",
  pages = "100-108",
  year = "2014"
}

@inproceedings{agrawal2012analysis,
  title={Analysis of Thompson sampling for the multi-armed bandit problem},
  author={Agrawal, Shipra and Goyal, Navin},
  booktitle={Conference on Learning Theory},
  pages={39--1},
  year={2012}
}

@inproceedings{kveton2015tight,
  title={Tight regret bounds for stochastic combinatorial semi-bandits},
  author={Kveton, Branislav and Wen, Zheng and Ashkan, Azin and Szepesvari, Csaba},
  booktitle={Artificial Intelligence and Statistics},
  pages={535--543},
  year={2015}
}

@article{besson2018doubling,
  title={What Doubling Tricks Can and Can't Do for Multi-Armed Bandits},
  author={Besson, Lilian and Kaufmann, Emilie},
  year={2018}
}

@article{auer2010ucb,
  title={{UCB revisited: Improved regret bounds for the stochastic multi-armed bandit problem}},
  author={Auer, Peter and Ortner, Ronald},
  journal={Periodica Mathematica Hungarica},
  volume={61},
  number={1-2},
  pages={55--65},
  year={2010},
  publisher={Akad{\'e}miai Kiad{\'o}, co-published with Springer Science+ Business Media BV, Formerly Kluwer Academic Publishers BV}
}

@INPROCEEDINGS{1250942,
  author={ {Raymond Chi-Wing Wong} and  {Ada Wai-Chee Fu} and K. {Wang}},
  booktitle={Third IEEE International Conference on Data Mining}, 
  title={MPIS: maximal-profit item selection with cross-selling considerations}, 
  year={2003},
  volume={},
  number={},
  pages={371-378},
}

@inproceedings{zhang2012joint,
	title={Joint optimization of bid and budget allocation in sponsored search},
	author={Zhang, Weinan and Zhang, Ying and Gao, Bin and Yu, Yong and Yuan, Xiaojie and Liu, Tie-Yan},
	booktitle={Proceedings of the 18th ACM SIGKDD International Conference on Knowledge Discovery and Data Mining},
	pages={1177--1185},
	year={2012},
	organization={ACM}
}

@inproceedings{nuara2018combinatorial,
	title={A Combinatorial-Bandit Algorithm for the Online Joint Bid/Budget Optimization of Pay-per-Click Advertising Campaigns},
	author={Nuara, Alessandro and Trovo, Francesco and Gatti, Nicola and Restelli, Marcello and others},
	booktitle={Thirty-Second AAAI Conference on Artificial Intelligence},
	pages={1840--1846},
	year={2018}
}

@article{xiang2016joint,
	title={Joint latency and cost optimization for erasure-coded data center storage},
	author={Xiang, Yu and Lan, Tian and Aggarwal, Vaneet and Chen, Yih-Farn R},
	journal={IEEE/ACM Transactions on Networking (TON)},
	volume={24},
	number={4},
	pages={2443--2457},
	year={2016},
	publisher={IEEE Press}
}

@article{auer2002finite,
  title={Finite-time analysis of the multiarmed bandit problem},
  author={Auer, Peter and Cesa-Bianchi, Nicolo and Fischer, Paul},
  journal={Machine Learning},
  volume={47},
  number={2-3},
  pages={235--256},
  year={2002},
  publisher={Springer}
}

@article{thompson1933likelihood,
  title={On the likelihood that one unknown probability exceeds another in view of the evidence of two samples},
  author={Thompson, William R},
  journal={Biometrika},
  volume={25},
  number={3/4},
  pages={285--294},
  year={1933},
  publisher={JSTOR}
}

@incollection{NIPS2007_3371,
title = {The Price of Bandit Information for Online Optimization},
author = {Varsha Dani and Kakade, Sham M and Thomas P. Hayes},
booktitle = {Advances in Neural Information Processing Systems 20},
editor = {J. C. Platt and D. Koller and Y. Singer and S. T. Roweis},
pages = {345--352},
year = {2008},
publisher = {Curran Associates, Inc.},
}

@article{Cesa-Bianchi:2012:CB:2240304.2240495,
 author = {Cesa-Bianchi, Nicol\`{o} and Lugosi, G\'{a}bor},
 title = {Combinatorial Bandits},
 journal = {J. Comput. Syst. Sci.},
 issue_date = {September, 2012},
 volume = {78},
 number = {5},
 month = sep,
 year = {2012},
 issn = {0022-0000},
 pages = {1404--1422},
 numpages = {19},
 url = {http://dx.doi.org/10.1016/j.jcss.2012.01.001},
 doi = {10.1016/j.jcss.2012.01.001},
 acmid = {2240495},
 publisher = {Academic Press, Inc.},
 address = {Orlando, FL, USA},
 keywords = {Adversarial bandit problems, Online linear optimization, Online prediction},
}

@article{Audibert:2014:ROC:2765232.2765234,
 author = {Audibert, Jean-Yves and Bubeck, S{\'e}bastien and Lugosi, G\'{a}bor},
 title = {Regret in Online Combinatorial Optimization},
 journal = {Math. Oper. Res.},
 issue_date = {February 2014},
 volume = {39},
 number = {1},
 month = feb,
 year = {2014},
 issn = {0364-765X},
 pages = {31--45},
 numpages = {15},
 url = {http://dx.doi.org/10.1287/moor.2013.0598},
 doi = {10.1287/moor.2013.0598},
 acmid = {2765234},
 publisher = {INFORMS},
 address = {Institute for Operations Research and the Management Sciences (INFORMS), Linthicum, Maryland, USA},
 keywords = {combinatorial optimization, minimax regret, mirror descent, multi-armed bandits, online optimization},
}

@InProceedings{pmlr-v84-liau18a,
  title = 	 {Stochastic Multi-armed Bandits in Constant Space},
  author = 	 {David Liau and Zhao Song and Eric Price and Ger Yang},
  booktitle = 	 {Proceedings of the Twenty-First International Conference on Artificial Intelligence and Statistics},
  pages = 	 {386--394},
  year = 	 {2018},
  editor = 	 {Amos Storkey and Fernando Perez-Cruz},
  volume = 	 {84},
  series = 	 {Proceedings of Machine Learning Research},
  address = 	 {Playa Blanca, Lanzarote, Canary Islands},
  month = 	 {09--11 Apr},
  publisher = 	 {PMLR},
  pdf = 	 {http://proceedings.mlr.press/v84/liau18a/liau18a.pdf},
  url = 	 {http://proceedings.mlr.press/v84/liau18a.html},
  abstract = 	 {We consider the stochastic bandit problem in the sublinear space setting, where one cannot record the win-loss record for all $K$ arms.  We give an algorithm using $O(1)$ words of space with regret $\sum_{i=1}^{K}\frac{1}{\Delta_i}\log \frac{\Delta_i}{∆}\log T$ where $\Delta_i$ is the gap between the best arm and arm $i$ and $∆$ is the gap between the best and the second-best arms.  If the rewards are bounded away from $0$ and $1$, this is within an $O(\log (1/∆))$ factor of the optimum regret possible without space constraints.}
}

@article{kveton2014matroid,
  title={Matroid bandits: Fast combinatorial optimization with learning},
  author={Kveton, Branislav and Wen, Zheng and Ashkan, Azin and Eydgahi, Hoda and Eriksson, Brian},
  journal={arXiv preprint arXiv:1403.5045},
  year={2014}
}

@inproceedings{gai2010learning,
  title={Learning multiuser channel allocations in cognitive radio networks: A combinatorial multi-armed bandit formulation},
  author={Gai, Yi and Krishnamachari, Bhaskar and Jain, Rahul},
  booktitle={New Frontiers in Dynamic Spectrum, 2010 IEEE Symposium on},
  pages={1--9},
  year={2010},
  organization={IEEE}
}

@InProceedings{pmlr-v28-chen13a,
  title = 	 {Combinatorial Multi-Armed Bandit: General Framework and Applications},
  author = 	 {Wei Chen and Yajun Wang and Yang Yuan},
  booktitle = 	 {Proceedings of the 30th International Conference on Machine Learning},
  pages = 	 {151--159},
  year = 	 {2013},
  editor = 	 {Sanjoy Dasgupta and David McAllester},
  volume = 	 {28},
  series = 	 {Proceedings of Machine Learning Research},
  address = 	 {Atlanta, Georgia, USA},
  month = 	 {17--19 Jun},
  publisher = 	 {PMLR},
}

@InProceedings{pmlr-v32-lind14,
  title = 	 {Combinatorial Partial Monitoring Game with Linear Feedback and Its Applications},
  author = 	 {Tian Lin and Bruno Abrahao and Robert Kleinberg and John Lui and Wei Chen},
  booktitle = 	 {Proceedings of the 31st International Conference on Machine Learning},
  pages = 	 {901--909},
  year = 	 {2014},
  editor = 	 {Eric P. Xing and Tony Jebara},
  volume = 	 {32},
  series = 	 {Proceedings of Machine Learning Research},
  address = 	 {Bejing, China},
  month = 	 {22--24 Jun},
  publisher = 	 {PMLR},
}

@inproceedings{li2010contextual,
  title={A contextual-bandit approach to personalized news article recommendation},
  author={Li, Lihong and Chu, Wei and Langford, John and Schapire, Robert E},
  booktitle={Proceedings of the 19th international conference on World wide web},
  pages={661--670},
  year={2010}
}

@incollection{hoeffding1994probability,
  title={Probability inequalities for sums of bounded random variables},
  author={Hoeffding, Wassily},
  booktitle={The Collected Works of Wassily Hoeffding},
  pages={409--426},
  year={1994},
  publisher={Springer}
}

@inproceedings{lattimore2018toprank,
  title={Toprank: A practical algorithm for online stochastic ranking},
  author={Lattimore, Tor and Kveton, Branislav and Li, Shuai and Szepesvari, Csaba},
  booktitle={Advances in Neural Information Processing Systems},
  pages={3945--3954},
  year={2018}
}

@article{gai2012combinatorial,
  title={Combinatorial network optimization with unknown variables: Multi-armed bandits with linear rewards and individual observations},
  author={Gai, Yi and Krishnamachari, Bhaskar and Jain, Rahul},
  journal={IEEE/ACM Transactions on Networking},
  volume={20},
  number={5},
  pages={1466--1478},
  year={2012},
  publisher={IEEE}
}

@inproceedings{kalyanakrishnan2012pac,
  author    = {Shivaram Kalyanakrishnan and
               Ambuj Tewari and
               Peter Auer and
               Peter Stone},
  title     = {{PAC} Subset Selection in Stochastic Multi-armed Bandits},
  booktitle = {Proceedings of the 29th International Conference on Machine Learning,
               {ICML} 2012, Edinburgh, Scotland, UK, June 26 - July 1, 2012},
  publisher = {icml.cc / Omnipress},
  year      = {2012},
  url       = {http://icml.cc/2012/papers/359.pdf},
  timestamp = {Wed, 03 Apr 2019 17:43:34 +0200},
  biburl    = {https://dblp.org/rec/conf/icml/KalyanakrishnanTAS12.bib},
  bibsource = {dblp computer science bibliography, https://dblp.org}
}

@inproceedings{kveton2015cascading,
  title={Cascading bandits: Learning to rank in the cascade model},
  author={Kveton, Branislav and Szepesvari, Csaba and Wen, Zheng and Ashkan, Azin},
  booktitle={International Conference on Machine Learning},
  pages={767--776},
  year={2015}
}

@book{bercu2015concentration,
  title={Concentration inequalities for sums and martingales},
  author={Bercu, Bernard and Delyon, Bernard and Rio, Emmanuel},
  year={2015},
  publisher={Springer}
}

@inproceedings{cohen2017tight,
  title={Tight Bounds for Bandit Combinatorial Optimization},
  author={Cohen, Alon and Hazan, Tamir and Koren, Tomer},
  booktitle={Conference on Learning Theory},
  pages={629--642},
  year={2017}
}

@article{audibert2014regret,
  title={Regret in Online Combinatorial Optimization},
  author={Audibert, Jean-Yves and Bubeck, S{\'e}bastien and Lugosi, G{\'a}bor},
  journal={Mathematics of Operations Research},
  volume={39},
  number={1},
  pages={31--45},
  year={2014},
  publisher={INFORMS}
}

@inproceedings{merlis2019batch,
  title={Batch-size independent regret bounds for the combinatorial multi-armed bandit problem},
  author={Merlis, Nadav and Mannor, Shie},
  booktitle={Conference on Learning Theory},
  pages={2465--2489},
  year={2019},
  organization={PMLR}
}

@inproceedings{merlis2020tight,
  title={Tight lower bounds for combinatorial multi-armed bandits},
  author={Merlis, Nadav and Mannor, Shie},
  booktitle={Conference on Learning Theory},
  pages={2830--2857},
  year={2020},
  organization={PMLR}
}

@inproceedings{agrawal2013thompson,
  title={Thompson sampling for contextual bandits with linear payoffs},
  author={Agrawal, Shipra and Goyal, Navin},
  booktitle={International conference on machine learning},
  pages={127--135},
  year={2013},
  organization={PMLR}
}

@inproceedings{agarwal2021stochastic,
  title={Stochastic Top-$ K $ Subset Bandits with Linear Space and Non-Linear Feedback},
  author={Agarwal, Mridul and Aggarwal, Vaneet and Quinn, Christopher J and Umrawal, Abhishek K},
  booktitle={Algorithmic Learning Theory},
  pages={306--339},
  year={2021},
  organization={PMLR}
}

@article{agarwal2022stochastic,
  title={Stochastic Top K-Subset Bandits with Linear Space and Non-Linear Feedback with Applications to Social Influence Maximization},
  author={Agarwal, Mridul and Aggarwal, Vaneet and Umrawal, Abhishek K and Quinn, Christopher J},
  journal={ACM/IMS Transactions on Data Science (TDS)},
  volume={2},
  number={4},
  pages={1--39},
  year={2022},
  publisher={ACM New York, NY}
}

@inproceedings{dimakopoulou2019marginal,
  title={Marginal Posterior Sampling for Slate Bandits.},
  author={Dimakopoulou, Maria and Vlassis, Nikos and Jebara, Tony},
  booktitle={IJCAI},
  pages={2223--2229},
  year={2019}
}

@article{liu2021map,
  title={A map of bandits for e-commerce},
  author={Liu, Yi and Li, Lihong},
  journal={arXiv preprint arXiv:2107.00680},
  year={2021}
}

\end{document}